\documentclass{article}

\PassOptionsToPackage{numbers}{natbib}

\usepackage[final]{neurips_2024}

\usepackage{microtype}
\usepackage{graphicx}
\usepackage{multirow}
\usepackage{enumitem}
\usepackage[normalem]{ulem}

\usepackage[utf8]{inputenc} %
\usepackage[T1]{fontenc}    %
\usepackage{hyperref}       %
\usepackage{url}            %
\usepackage{booktabs}       %
\usepackage{amsfonts}       %
\usepackage{nicefrac}       %
\usepackage{microtype}      %
\usepackage{xcolor}         %
\usepackage[textsize=tiny]{todonotes}
\usepackage{wrapfig} %
\usepackage{epstopdf}

\usepackage{tcolorbox}
\usepackage{xfrac}
\usepackage{wrapfig}

\newtcolorbox{mybox}{
    colback=gray!10!white,
    colframe=gray!10!white,
    left=1mm,
    top=1mm,
    right=1mm,
    boxsep=0mm,
    width=14cm,  
    before=\par\smallskip\centering,
    after=\par,
    height=1cm
}

\usepackage{amsmath,amsfonts,bm,dsfont,amsthm}

\def\eqref#1{equation~\ref{#1}}

\def\1{\bm{1}}

\DeclareMathAlphabet{\mathsfit}{\encodingdefault}{\sfdefault}{m}{sl}
\SetMathAlphabet{\mathsfit}{bold}{\encodingdefault}{\sfdefault}{bx}{n}

\DeclareMathOperator*{\argmax}{arg\,max}
\DeclareMathOperator*{\argmin}{arg\,min}

\usepackage{tcolorbox}
\definecolor{grey}{rgb}{0.33, 0.33, 0.33}

\newcommand{\squishlist}{
\begin{list}{{{\small{$\bullet$}}}}
{\setlength{\itemsep}{1pt}      \setlength{\parsep}{0pt}
\setlength{\topsep}{-2pt}       \setlength{\partopsep}{0pt}
\setlength{\leftmargin}{1em} \setlength{\labelwidth}{1em}
\setlength{\labelsep}{0.5em} } }
\newcommand{\squishend}{  \end{list}  }

\makeatletter
\renewcommand*\env@matrix[1][*\c@MaxMatrixCols c]{%
  \hskip -\arraycolsep
  \let\@ifnextchar\new@ifnextchar
  \array{#1}}
\makeatother

\newcommand{\val}{\circ}

\usepackage{algorithm}
\usepackage{algorithmic} 
\newtheorem{assumption}{Assumption}[section]
\newtheorem{definition}{Definition}[section]

\newtheorem{proposition}{Proposition}[section]

\newtheorem{theorem}{Theorem}[section]

\newtheorem{lemma}{Lemma}[section]
\newtheorem{remark}{Remark}[section]

\newtheoremstyle{citing}%
  {3pt}%
  {3pt}%
  {\itshape}%
  {}%
  {\bfseries}%
  {.}%
  {.5em}%
  {\thmnote{#3}}%

\theoremstyle{citing}
\newtheorem*{varthm}{}%

\usepackage{caption}
\usepackage{subcaption}

\ifodd 1
\newcommand{\yl}[1]{\textbf{\color{red}(Yang: #1)}}
\newcommand{\zzw}[2]{\textbf{\color{blue}(Zhaowei: #1)}{\color{blue}~#2}}
\newcommand{\jlp}[1]{\textbf{\color{blue}(Jinlong: #1)}}

\newcommand{\clar}[1]{\textbf{\color{green}(NEED CLARIFICATION: #1)}}

\else

\newcommand{\com}[1]{}
\newcommand{\clar}[1]{}
\newcommand{\response}[1]{}
\newcommand{\yl}[1]{}
\newcommand{\zzw}[2]{}
\newcommand{\jlp}[1]{}

\fi

\definecolor{mygray}{gray}{0.6}

\newcommand{\infl}[1]{ \textsf{Infl}_\textsf{#1} }

\definecolor{myorange}{RGB}{241,167,108}
\definecolor{myblue}{RGB}{94,167,243}
\definecolor{mygreen}{RGB}{92,199,60}

\usepackage{colortbl}
\definecolor{best}{HTML}{BAFFCD}
\definecolor{issue}{HTML}{FFC8BA}
\definecolor{bad}{HTML}{FFC8BA}

\title{Fairness Without Harm:\\
An Influence-Guided Active Sampling Approach}

\author{
 Jinlong Pang \\
  UC Santa Cruz\\
 \texttt{jpang14@ucsc.edu} \\
  \And 
  Jialu Wang \\
  UC Santa Cruz \\
    \texttt{faldict@ucsc.edu} \\
  \And
  Zhaowei Zhu \\
  Docta.ai \\
  \texttt{zzw@docta.ai} \\
      \And     
  Yuanshun Yao \\
  Meta GenAI \\
  \texttt{ kevinyao@meta.com
} \\
  \And
    Chen Qian \\
  UC Santa Cruz \\
  \texttt{cqian12@ucsc.edu} \\
  \And
  Yang Liu\thanks{Corresponding Author: Yang Liu} \\
  UC Santa Cruz \\
  \texttt{yangliu@ucsc.edu} \\
}

\begin{document}

\maketitle

\begin{abstract}
The pursuit of fairness in machine learning (ML), ensuring that the models do not exhibit biases toward protected demographic groups, typically results in a compromise scenario. This compromise can be explained by a Pareto frontier where given certain resources (e.g., data), reducing the fairness violations often comes at the cost of lowering the model accuracy. 
In this work, we aim to train models that mitigate group fairness disparity without causing harm to model accuracy.
Intuitively, acquiring more data is a natural and promising approach to achieve this goal by reaching a better Pareto frontier of the fairness-accuracy tradeoff.
The current data acquisition methods, such as fair active learning approaches, typically require annotating sensitive attributes. However, these sensitive attribute annotations should be protected due to privacy and safety concerns.
In this paper, we propose a tractable active data sampling algorithm that does not rely on training group annotations, instead only requiring group annotations on a small validation set. Specifically, the algorithm first scores each new example by its influence on fairness and accuracy evaluated on the validation dataset, and then selects a certain number of examples for training. 
We theoretically analyze how acquiring more data can improve fairness without causing harm, and validate the possibility of our sampling approach in the context of risk disparity. We also provide the upper bound of generalization error and risk disparity as well as the corresponding connections.
Extensive experiments on real-world data demonstrate the effectiveness of our proposed algorithm. 
Our code is available at \href{https://github.com/UCSC-REAL/FairnessWithoutHarm}{\texttt{github.com/UCSC-REAL/FairnessWithoutHarm}}.

\end{abstract}

\section{Introduction}\label{sec:intro}

Machine Learning (ML) has dramatically impacted numerous optimization and decision-making processes across various domains, such as credit scoring \citep{siddiqi2012credit} and demand forecasting \citep{Carbonneau2008ApplicationOM}.
Algorithmic fairness embraces the principle, often enforced by law and regulations, that the decision-maker should not exhibit biases toward protected group membership \citep{zhu2023weak}, identified by characteristics such as race, gender, or disability.
However, the pursuit of fairness unavoidably results in a compromise scenario where reducing the fairness violations usually leads to a degradation in accuracy, which has been observed and verified by numerous literature \citep{menon2018cost,dutta2020there,zhao2022inherent,wang2022understanding,zhu2021rich}. 
Theoretically,  the phenomenon can be understood through a Pareto frontier on the tradeoff between group fairness and accuracy \citep{wang2021understanding, martinez2020minimax, balashankar2019fair,wei2022fairness}.
That is, as illustrated in Figure~\ref{fig:main_idea}, given certain resources such as training data,  when a model has reached a point on the Pareto frontier, without more data resources, it is impossible that one can improve fairness without worsening off model accuracy.

One major source of unfairness and a major cause of the fairness-accuracy tradeoff is biased training data.
If an unbiased and ``fairer'' dataset is available, we will be hopeful that unfairness can be alleviated without compromising accuracy. Furthermore,
such a ``fairer'' dataset would allow for obtaining a fair and accurate model through the standard empirical risk minimization (ERM) with cross-entropy (CE) loss.
The above observation points to a promising way to improve fairness via actively acquiring more informative data, aiming to shift towards a better Pareto frontier of the fairness-accuracy trade-off \citep{wiki_2024ProductionPossibility, lipsey1975introduction}.
However, existing approaches that seek more data, such as fair active learning \citep{anahideh2022fair}, typically require annotating sensitive attributes for training data.
In practice, these sensitive attribute information such as race and gender, should be protected due to privacy regulations \citep{holstein2019improving, andrus2021we, veale2017fairer}. 
In the normal active learning scenario, collecting more data with sensitive attributes heightens privacy and safety risks due to the increased probability of leaking sensitive information.

\begin{wrapfigure}{r}{0.5\textwidth}
\centering
    \vspace{-0.27in}
    \includegraphics[width=0.5\textwidth]{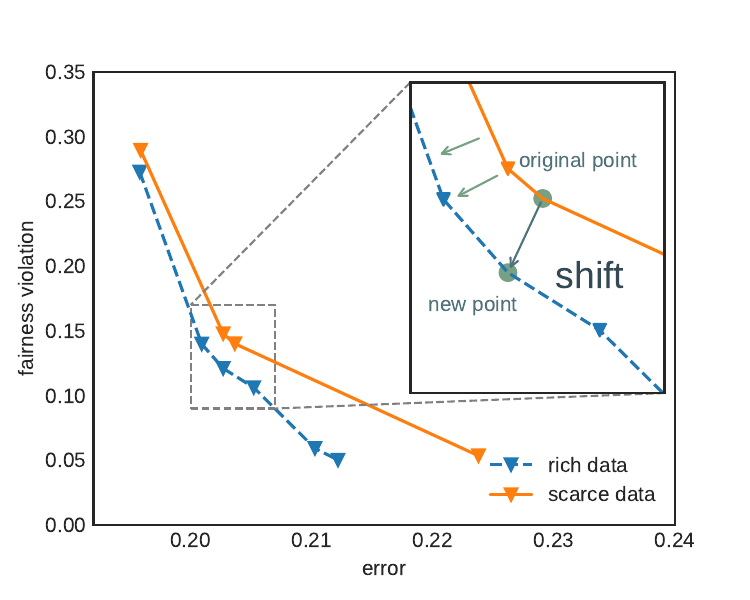}
    \caption{We compare the Pareto frontiers between the model trained with scarce data and that trained with rich data. 
    Acquiring more data is capable of shifting the Pareto frontier toward lower disparity and lower error rates. %
    In consequence, we can reach a new trade-off point that offers improved fairness and accuracy simultaneously, surpassing the original trade-off point.
    }
    \label{fig:main_idea}
\end{wrapfigure}


Therefore, we ask the following question:
\textit{When not disclosing more annotations of training sensitive attributes, how can we acquire more data to improve model fairness without sacrificing accuracy?}

In this paper, we propose a tractable active data sampling algorithm in a \textit{training sensitive attributes-free} way, which solely requires sensitive attributes on a \textit{small validation set}.
Particularly, the algorithm evaluates each example's influence on fairness and accuracy using the validation dataset for ranking and then selects a certain number of examples to supplement the training set for training.
We name our solution Fair Influential Sampling (FIS).
The core challenge is approximating the corresponding influences of each new example without accessing its sensitive attributes. Technically, we evaluate the importance (influences) of each new example by comparing its gradient to that derived from the entire validation set. This comparison helps quantify the hypothesized change of group fairness disparity metric when adding this example to the training set. As a result, the requirement of training sensitive attributes can be relaxed, as gradient derivation serves as a role of fairness constraints to measure the group fairness disparity. 
The main contributions of our work are summarized as follows.
\squishlist
\item 
We develop a tractable active data sampling algorithm (Algorithm \ref{alg:fair_influential_sampling}) that does not rely on training sensitive attributes. The algorithm scores each new example based on the combined influences of prediction and fairness and then opts for a certain number of examples for training. [Section \ref{sec:method}]
\item We theoretically analyze how acquiring more data can improve fairness without harm from a distribution shift perspective view, and validate the possibility of our sampling approach in the context of risk disparity. 
 We also provide the upper bound of generalization error and risk disparity as well as the corresponding connections (Theorem \ref{lemma:accuracy_bound} and Theorem \ref{thm:performance_gap_test}). [Section \ref{sec:theoretical_analysis}]
\item Empirical experiments on real-world datasets (CelebA, Adult, and COMPAS) substantiate our claims, indicating the effectiveness and potential of our proposed algorithm in achieving fairness for ML classifiers. [Section \ref{sec:exp}]
\squishend

\section{Related work\label{sec:related_work}}
\paragraph{Fairness-accuracy tradeoff}  There are numerous works that
have been successful at mitigating fairness disparities \citep{donini2018empirical,hardt2016equality,agarwal2018reductions,zafar2017fairness,wang2022understanding}. However, these works typically rely on protected sensitive attributes of training examples to measure the fairness disparities across groups. Moreover, a fairness-accuracy tradeoff has been shown, meaning that enforcing fair constraints heavily degrades the model performance \citep{menon2018cost,dutta2020there,zhao2022inherent}. Notably, Chen et al. \citep{chen2022fairness} characterized the change of the fairness violation when the data distribution is shifted.
Except for training sensitive attributes, this paper does not work in the classical regime of the fairness-accuracy tradeoff. By properly collecting new data, we can improve both accuracy and fairness, which cannot be achieved by working on a static training dataset that naturally incurs such a tradeoff.
Besides, compared to prior works \citep{menon2018cost, prost2019toward}, our method does not require additional assumptions about the classifier and the characteristics of the training/testing datasets (e.g., distribution shifts). 
Relevant work \citep{li2022achieving} utilizes the influence function to reweight the data examples but requires re-training. 
Our method focuses on soliciting additional samples from an external dataset while \citep{li2022achieving} reweights the existing and fixed training dataset.

\paragraph{Active learning}
The core idea of active learning is to rank unlabeled instances by developing specific measures,
including uncertainty \citep{lewis1994heterogeneous, lindenbaum2004selective}, representativeness \citep{dasgupta2008hierarchical}, inconsistency \citep{wang2012inconsistency}, variance \citep{hoi2006large}, and error \citep{roy2001toward}. 
A related line of work \citep{liu2021influence, wang2022boosting, hammoudeh2022training, xie2023active} concentrates on ranking unlabeled instances based on the influence function. Compared to these studies with a focus on prediction performance, our work poses a distinct challenge taking into account fairness violations. 
Our approach is more closely with the \textit{fair active learning} approach \citep{anahideh2022fair}. However, this framework still relies on training sensitive attributes and then unavoidably encounters the tradeoff between fairness and accuracy.

\paragraph{Fair classifiers without demographics}
There are various studies to achieve fairness without demographics. For example,
Zhao et al. \citep{zhao2022towards} explores the correlations between sensitive attributes and non-sensitive attributes to learn fair and accurate classifiers.
Yan et al. \citep{yan2020fair} investigates the class imbalance problem with a  KNN-based pre-processing method.
Chai et al. \citep{chai2022fairness} utilizes the soft labels from an overfitting teacher model to train a student model to avoid using demographics.
A line of research establishes theoretical connections between features and attributes to avoid using demographic information, employing methods like causal graphs \citep{singh2021fairness}, correlation shifts \citep{roh2023improving}, and demographic shifts \citep{giguere2022fairness}. In contrast, our approach refrains from making assumptions.
Another line of work utilizes distributionally robust optimization (DRO) to reduce fairness disparity without relying on training sensitive attributes \citep{hashimoto2018fairness, kirichenko2022last, liu2021just, lahoti2020fairness, veldanda2023hyper, sohoni2020no}. 
Although these works evaluate the worst-case group performance in the context of fairness, their approaches differ as they do not strive to equalize the loss across groups. Besides, in these studies, accuracy and worst-case accuracy are used as fairness metrics to  showcase the efficacy of the proposed algorithms. 
However, these fairness metrics are restrictive and inconsistent with common definitions such as demographic parity (DP).

\paragraph{Fair classification}
The fairness-aware learning algorithms, in general, can be categorized into pre-processing, in-processing, and post-processing methods. Pre-processing methods typically reweigh or distort the data examples to mitigate the identified biases \citep{asudeh2019assessing,azzalini2021fair,tae2021slice, sharma2020data,celis2020data,chawla2002smote,zemel2013learning,chen2018my,10.1145/3442188.3445915}.
More relevant to this work is the \textit{importance reweighting}, which assigns weights to training examples \citep{kamiran2012data, jiang2020identifying, diesendruck2020importance,choi2020fair, qraitem2023bias, li2019repair}. %
Our algorithm bears similarity to a specific case of importance reweighting, particularly the 0-1 reweighting applied to newly added data. Other parallel studies utilize importance weighting to learn a complex generative model in a weakly supervised setting \citep{diesendruck2020importance, choi2020fair}, or to mitigate representation bias in training datasets \citep{li2019repair}. 
Post-processing methods typically enforce
fairness on a learned model through calibration \citep{feldman2015computational, feldman2015certifying, hardt2016equality}, but these work might not achieve the best fairness-accuracy tradeoff \citep{woodworth2017learning,pleiss2017fairness}. 
In contrast, these post-processing works still require sensitive attributes during the inference phase. 
Recent work \citep{chen2023post} develops a \textit{bias score} classifier that operates independently of sensitive attributes; however, it is constrained to binary classifications.

\section{Preliminaries}\label{sec:preliminary}
\paragraph{Problem setup}
We consider a standard $K$-class classification task whose training (test) data distribution is $\mathcal{P}$ ($\mathcal{Q}$). Let $P:=\{z_n\}_{n=1}^{|P|}$ represent the \textit{training dataset} following distribution $\mathcal{P}$, where $|P|$ denotes the corresponding sample size. 
Each example, denoted as \underline{$z_n := (x_n, y_n)$}, comprises two random variables: the \textit{feature vector} $x$ and the \textit{label} $y$.
The model trained on $P$ is evaluated by the \textit{test dataset} $Q:=\{z^\val\}_{n=1}^{|Q|}$, where ${(\cdot)}^{\val}$ denotes that the data follows distribution $\mathcal Q$, each example \underline{$z_n^\val := (x^\val, y^\val, s^\val)$}, and the sensitive group $s_n^\val$ often refers to characteristics such as race, gender, etc. 
To align the fairness requirements on the test set $Q$ with the model trained on $P$, a popular way is to exploit the sensitive attributes $s$ \citep{donini2018empirical,wang2021understanding} or their proxies \citep{zhu2023weak} in $P$ and use them to formulate Lagrangians during training. However, extending these approaches to the active learning 
setting would require disclosing more sensitive attributes during sampling and training \citep{anahideh2022fair}, which \textit{contradicts} our goal. To avoid disclosing more sensitive attributes, we align the fairness requirements by a small hold-out \textit{validation dataset} $Q_v:=\{z_n^\val\}_{n=1}^{|Q_v|}$ that is independent and identically distributed (IID) with the test set $Q$. We defer more technical details to Section~\ref{sec:method}.

 Following the general active learning setting \citep{anahideh2022fair}, we would acquire new examples from a large \textit{unlabeled dataset} $U:=\{x'_n\}_{n=1}^{|U|}$ within a limited labeling budget $B (\ll |U|)$ \citep{liu2021influence, wang2022boosting, hammoudeh2022training}.
Denote the solicited example by \underline{$z'_n:=(x'_n, y'_n)$}, where $y'_n$ is the ground-truth label.
Note that the protected sensitive attributes from datasets $P$ and $U$ remain undisclosed during sampling and training.
In this paper, we aim to incrementally update a model that was initially trained on $P$ using standard ERM, by incorporating newly solicited data $z_n'$, such that the model can improve fairness without worsening model accuracy.
Thus, the core challenge is efficiently determining new examples that induce a significantly better Pareto frontier. In the proceeding section \ref{sec:method}, we shall delve into how to acquire new data from unlabeled set.

\paragraph{Fairness definition}
Note that this work focuses solely on active sampling to build a fairer dataset, which is then used to train the model through standard ERM with Cross-Entropy (CE) loss. Without relying on additional assumptions about the model or training/testing dataset, an intuitive and natural approach is to analyze the expected risk. 
Therefore, we introduce the concept of \textit{risk disparity} as an intermediate-term for theoretical analysis of fairness.

\begin{definition}
[Risk disparity \citep{hashimoto2018fairness, zafar2019fairness, agarwal2019fair}] 
Define $Q_k$ as the sub-distribution of $Q$ corresponding to group $k$. Given the optimized model parameters $\mathbf{w}^{P}$ trained on set $P$, risk disparity is defined as: $\mathcal{R}_{\mathcal{Q}_k}(\mathbf{w}^{P})- \mathcal{R}_{\mathcal{Q}}(\mathbf{w}^P)$, where $ \mathcal{R}_{\mathcal{Q}}(\mathbf{w}) := \mathbb{E}_{z\sim \mathcal{Q}} [\ell(\mathbf{w}, z)]$ denotes expected risk induced on target distribution $\mathcal{Q}$. 
\label{def:fairness}
\end{definition}
Definition \ref{def:fairness} naturally quantifies the discrepancy in a trained model's performance between a specific group set  $Q_k$ and the entire test set $Q$.
That is, a model can be deemed fair if it exhibits consistent performance for a group set $Q_k$ as compared to the test dataset $Q$. 
In settings such as face or speech recognition, this fairness definition implies the necessity for all demographic groups to receive the same quality service \citep{buolamwini2018gender}.
For completeness, we also include two well-known definitions of fairness:

\begin{definition}
[Demographic Parity (\texttt{DP})] A classifier \(f\) adheres to demographic parity concerning the sensitive attribute \(s\) if: $\mathbb{E}[f(\mathbf{w}, x)] = \mathbb{E}[f(\mathbf{w}, x)|s]$.
\end{definition}

\begin{definition}
[Equalized Odds (\texttt{EOd} \citep{hardt2016equality}] 
A classifier $f$ meets the equalized odds with respect to the sensitive attribute $s$ if: $\mathbb{E}[f(\mathbf{w}, x)|y] = \mathbb{E}[f(\mathbf{w}, x)|y, s]$.
\end{definition}

Even though there may be a general incompatibility between risk disparity and popular group fairness metrics like DP and EOd, under the criteria of the proposed fairness notion, these definitions could be encouraged \citep{shui2022learning, hashimoto2018fairness}. 
More details and proof can be found in the Appendix \ref{appendix:connection_between_risk_and_EOd}.
\begin{proposition}\label{property:connections_to_DP_EOd}
(Informal) Under appropriate conditions, the risk disparity can serve as a lower bound for fairness disparities based on common fairness definitions, such as DP and EOd.
\end{proposition}

\begin{remark}
[Connections to other fairness definitions]
Definition \ref{def:fairness} targets group-level risk fairness, which has similar granularity to other fairness notions such as accuracy parity \citep{zafar2017fairness}, device-level parity \citep{li2019fair}, small accuracy loss for groups \citep{zafar2019fairness, balashankar2019fair, martinez2019fairness, hashimoto2018fairness}, and bounded group loss \citep{agarwal2019fair}. 
\end{remark}

\section{Improving fairness without harm via data influential sampling}\label{sec:method}
In this section, we first introduce how to measure the importance (influence) of each example on accuracy and fairness without using the corresponding sensitive attributes, respectively.
Then, we propose an influence-guided sampling algorithm that actively acquires new data based on the influences for further training.

\subsection{Finding influential examples}

To avoid using training sensitive attributes, our primary idea is to find newly acquired data that assists in creating a ``fairer'' dataset, which allows for training a fair and accurate model via standard ERM. Initially, we explore whether newly acquired data enhances fairness by examining the training process, where the model is typically updated using gradient descent. The change of model parameters by performing one step gradient descent on newly acquired data $z'$ is 
\begin{equation}\label{eq:change_model_parameters}
    \mathbf{w}_{t+1} = \mathbf{w}_{t} - \eta\cdot \partial_{\mathbf{w}_t} \ell(\mathbf{w}_t, z')
\end{equation}
where $\eta$ refers to the learning rate and $\ell(\cdot)$ is the training loss function. It should be noted that before we solicit the true labels
of samples $z'$, we first use proxy labels. In the following subsection \ref{subsection:algorithm}, we will present a strategy for proxy labels.
Training on $z'$ affects the model's prediction on validation data $z_n^\val$ regarding both accuracy and fairness.
If the updated model $\mathbf{w}_{t+1}$ outperforms the previous one $\mathbf{w}_{t}$ evaluated on the validation dataset in terms of fairness and accuracy, this acquired data $z'$  helps to reduce the fairness disparity without worsening accuracy.

To separately measure the accuracy and fairness performance of the updated model on the validation set, we introduce two types of loss functions: \textbf{fairness loss}  $\underline{\phi(\mathbf{w}, z_n^\val)}$ and \textbf{accuracy loss} $\underline{\ell(\mathbf{w}, z_n^\val)}$, where validation data $z_n^\val = (x_n^\val, y_n^\val, s_n^\val)$.
Note that these loss functions are developed for sampling, not for training. 
Besides, training loss function $\ell(\cdot)$ can be reused as the accuracy loss function due to the same update target. One can identify training loss $\ell(\cdot, z')$ and accuracy loss $\ell(\cdot, z_n^\val)$ based on the input data used.
Without loss of generality, we assume that $\ell(\cdot)$ and $\phi(\cdot)$ are differentiable w.r.t. $\mathbf{w}$. Here, we do not restrict the generality of the fairness loss function; it can be any smoothed version of fairness metrics such as DP or EOd. 
Following this, we develop the influence of the accuracy and fairness components for finding the samples, respectively.

\paragraph{Influence of accuracy component}
When model parameters are updated from $\mathbf{w}_t$ to $\mathbf{w}_{t+1}$ by adding a new example $z'$, the influence of model's accuracy on one validation example $z_n^\val$ is:
\begin{equation*}
    \begin{aligned}
         & \infl{acc}(z', z_n^\val; \mathbf{w}_t, \mathbf{w}_{t+1})
        :=
 \ell(\mathbf{w}_{t+1}, z_n^\val) - \ell(\mathbf{w}_{t}, z_n^\val).
    \end{aligned}
\end{equation*}
For ease of notation, we use $\infl{acc}(z', z_n^\val)$ to represent $\infl{acc}(z', z_n^\val; \mathbf{w}_t, \mathbf{w}_{t+1})$. By applying first-order Taylor expansion, we obtain the following closed-form statement:

\begin{lemma}
\label{lemma:influence_of_predictions}
    The  accuracy influence of  new example $z'$  on the validation dataset $Q_v$ is:
\begin{mybox}
\begin{equation}\label{eq:influence-of-prediction}
\begin{aligned}
 \infl{acc}(z')
:= \sum\nolimits_{n \in |Q_v|} \infl{acc}(z', z_n^\val)   
\approx   - \eta \sum\nolimits_{n \in|Q_v|} \left\langle \partial_{\mathbf{w}_{t}} \ell(\mathbf{w}_t, z'),  \partial_{\mathbf{w}_{t}} \ell(\mathbf{w}_t, z_n^\val)\right\rangle
\end{aligned} 
\end{equation}
\end{mybox}
\end{lemma} 
Intuitively, the more negative $\infl{acc}(z')$ is, the more positive the model accuracy (performance) that example $z'$ can provide.

\paragraph{Influence of fairness component}
When model parameters are updated from $\mathbf{w}_t$ to $\mathbf{w}_{t+1}$ by adding a new example $z'$, the influence of model's fainess on one validation example $z_n^\val$ is:
\begin{equation}
   \infl{fair}(z', z_n^\val; \mathbf{w}_t, \mathbf{w}_{t+1}) := \phi(\mathbf{w}_{t+1}, z_n^\val) 
- \phi(\mathbf{w}_{t},  z_n^\val) .     
\end{equation}

For simplicity, we write $\infl{fair}(z', z_n^\val; \mathbf{w}_t, \mathbf{w}_{t+1})$ as $\infl{fair}(z', z_n^\val)$. Then, similarly, we have:
\begin{lemma}
\label{lemma:influence_of_fairness}
The fairness influence of new example $z'$ on the validation dataset $Q_v$ is:
\begin{mybox}
\begin{equation}\label{eq:influence-of-fairness}
\begin{aligned}
     \infl{fair}(z') 
     : =  \sum\nolimits_{n \in |Q_v|} \infl{fair}(z', z_n^\val) 
     \approx   - \eta \sum\nolimits_{n\in|Q_v|} \left\langle \partial_{\mathbf{w}_t} \ell(\mathbf{w}_t, z'), \partial_{\mathbf{w}_t}  \phi(\mathbf{w}_t, z_n^\val) \right\rangle 
\end{aligned}
\end{equation}
\end{mybox}
\end{lemma}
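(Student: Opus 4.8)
The goal is to prove Lemma~\ref{lemma:influence_of_fairness}, which expresses the fairness influence $\infl{fair}(z')$ of a new example $z'$ on the validation set $Q_v$ in closed form. The plan is to mirror the derivation that yields Lemma~\ref{lemma:influence_of_predictions}, since the two statements are structurally identical: the only change is that the per-validation-point quantity being differenced is the fairness loss $\phi(\mathbf{w},z_n^\val)$ rather than the prediction loss $\ell(\mathbf{w},z_n^\val)$. First I would write out the definition
\[
\infl{fair}(z') = \sum_{n\in[N_v]} \infl{fair}(z',z_n^\val) = \sum_{n\in[N_v]}\bigl(\phi(\mathbf{w}_{t+1},z_n^\val) - \phi(\mathbf{w}_t,z_n^\val)\bigr),
\]
and substitute the parameter update rule from Eq.~(\ref{eq:change_model_parameters}), namely $\mathbf{w}_{t+1} = \mathbf{w}_t - \eta\,\partial_{\mathbf{w}_t}\ell(\mathbf{w}_t,z')$, so that $\mathbf{w}_{t+1}-\mathbf{w}_t = -\eta\,\partial_{\mathbf{w}_t}\ell(\mathbf{w}_t,z')$ is a small (order $\eta$) perturbation.

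Next I would apply a first-order Taylor expansion of $\phi(\cdot,z_n^\val)$ about $\mathbf{w}_t$: since $\phi$ is assumed differentiable in $\mathbf{w}$,
\[
\phi(\mathbf{w}_{t+1},z_n^\val) - \phi(\mathbf{w}_t,z_n^\val) = \bigl\langle \partial_{\mathbf{w}_t}\phi(\mathbf{w}_t,z_n^\val),\, \mathbf{w}_{t+1}-\mathbf{w}_t\bigr\rangle + o(\eta).
\]
Plugging in the update and summing over $n$ gives
\[
\infl{fair}(z') \approx -\eta\sum_{n\in[N_v]}\bigl\langle \partial_{\mathbf{w}_t}\phi(\mathbf{w}_t,z_n^\val),\, \partial_{\mathbf{w}_t}\ell(\mathbf{w}_t,z')\bigr\rangle,
\]
which, using symmetry of the inner product, is exactly Eq.~(\ref{eq:influence-of-fairness}). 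The sum over $n$ can be pulled inside or left outside; unlike the prediction case one cannot collapse it into a single gradient of a summed loss unless $\phi$ is additive over validation points, which is why the statement keeps the sum explicit.

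The argument is essentially a one-line Taylor expansion, so there is no deep obstacle; the only thing to be careful about is bookkeeping around the approximation symbol $\approx$, i.e.\ making explicit that the neglected term is $O(\eta^2)$ (equivalently second-order in the gradient step) and therefore negligible for small learning rate $\eta$. If one wanted a fully rigorous bound rather than a heuristic ``$\approx$'', the mild extra ingredient would be Lipschitz continuity of $\nabla_{\mathbf{w}}\phi$ together with the bounded-gradient Assumption~\ref{assumption:bounded_gradient} to control $\lVert\partial_{\mathbf{w}_t}\ell(\mathbf{w}_t,z')\rVert$, giving a remainder of order $\eta^2 G^2$ times the smoothness constant; but at the level of the stated lemma this is not needed, and I would present it with the same first-order Taylor justification used for Lemma~\ref{lemma:influence_of_predictions}.
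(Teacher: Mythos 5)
Your proposal is correct and follows essentially the same route as the paper: a first-order Taylor expansion of $\phi(\cdot,z_n^\val)$ around the single SGD step $\mathbf{w}_{t+1}=\mathbf{w}_t-\eta\,\partial_{\mathbf{w}_t}\ell(\mathbf{w}_t,z')$, then summing over the validation points. The only cosmetic difference is that the paper expands in two stages through the model output $f(\mathbf{w},x_n^\val)$ (first $\phi$ in $f$, then $f$ in $\mathbf{w}$), which collapses to your direct expansion in $\mathbf{w}$ by the chain rule.
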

Similar to the accuracy component, the greater the negativity of $\infl{fair}(z')$ is, the greater the positive impact that the example $z'$ has on fairness.

\paragraph{Intuitions}
These two components evaluate the accuracy and fairness impact of each example by comparing the gradient originating from a single data sample with the gradient derived from the entire validation set, respectively. 
This comparison helps quantify the potential advantage of including this specific example in training. For instance, if the gradient obtained from one example has a similar direction to the gradient from the validation set, it indicates that incorporating this example contributes to enhancing the model's fairness or accuracy.

\paragraph{Training sensitive attributes are not disclosed}
One can easily check that neither the influence of accuracy nor fairness components require the sensitive attributes of any example $z'$, as the example $z'$ only appears in the first-order gradient of the accuracy loss $\partial_{\mathbf{w}_t} \ell(\mathbf{w}_t, z')$. In the fairness component, calculating the  $\partial_{\mathbf{w}_t}  \phi(\mathbf{w}_t, z_n^\val)$ only relies on validation example $z^\val_n$'s sensitive attributes.
Here, we also validate how accurate the first-order estimation of the influence is in comparison to the real influence \citep{koh2017understanding}, and find that the estimated influences for most of the examples are very close to their actual influence values. We refer the readers to Appendix \ref{appendix:influence_score} for more details.

Even without disclosing training sensitive attributes, the correlations between non-demographic features and demographic information may still lead to privacy leakage issues \citep{zhao2022towards}. 
To address this potential privacy concern, we provide further discussions and theoretical analysis using differential privacy in Appendix \ref{appendix:correlation_non-demongraphic_and_demographic}.

\subsection{Algorithm: fair influential sampling (FIS)}
\label{subsection:algorithm}

Following Lemma \ref{lemma:influence_of_predictions} and Lemma \ref{lemma:influence_of_fairness}, we can efficiently select those examples with the most negative fairness influence and negative accuracy influence. This sampling method aids in reducing fairness disparities without worsening model accuracy.

\paragraph{Labeling}
Before presenting our sampling algorithm, it is necessary to address the problem of not accessing the true labels of new solicited examples.  
Lacking the label information for new examples poses a challenge in determining the corresponding influence on accuracy and fairness, a fact that is substantiated by Lemma \ref{lemma:influence_of_predictions} and Lemma \ref{lemma:influence_of_fairness}.
Intuitively, one can always recruit human annotators to get the ground-truth labels for those unlabeled examples. However, it is impractical due to the limited labeling budgets.
To tackle this problem, another common approach is utilizing a model that has been effectively trained on dataset $P$ to produce proxy labels, which approximate the calculation of influences for examples from a substantial unlabeled dataset $U$. It's important to note that these proxy labels are exclusively used during the sampling phase. To maintain good model performance, we still need to inquire about the true labels of the selected data examples for subsequent training. 
Here, we propose to annotate the proxy labels with the model trained on the labeled set $P$. In particular, we introduce a strategy that employs lowest-influence labels for annotating label $\hat y'$ given $x'$:
\begin{equation}\label{eq:strategy}
    \hat y' =  \argmin_{k\in\{1,\cdots, K\}} ~ |\infl{acc}(x', k) |,
\end{equation}
Here, we denote $\hat z':=(x',\hat y')$ for the proxy labels.

\begin{algorithm}[t]
\caption{Fair influential sampling (FIS)}\label{alg:fair_influential_sampling} 
\begin{algorithmic}[1]
\STATE \textbf{Input:} training set $P$, unlabeled set $U$, validation set $Q_v$, new acquired set $S_t =\{\}, \forall t \in [T]$ rounds, number of new selected examples in each round $r$, tolerance $\epsilon$.   \\
\STATE \textbf{Warmup:} Train a classifier $f$ solely on $P$ by minimizing the empirical risk $R_{\ell}$. Obtain model parameters $\mathbf{w}_1$ and validation accuracy (on $Q_v$) $\text{VAL}_0$. \\
\FOR {$t$ \textbf{in} $\{1, 2, \cdots, T\}$}
\STATE Guess proxy label $\hat y'$ for new examples $\hat z'$ using Eq. (\ref{eq:strategy}).
\STATE Compute the influence of accuracy and fairness component using Eq. (\ref{eq:influence-of-prediction}) and Eq. (\ref{eq:influence-of-fairness}):
\STATE \( \qquad \qquad \qquad
    S_\text{original} = \{ \infl{fair}(\hat z')  ~|~  \infl{acc}(\hat z') \le 0, \infl{fair}(\hat z') \le 0, \hat z' \in U\}
\)
\WHILE{$|S_t| < r$}  
\STATE Find top-$(r-|S_t|)$ annotated examples $\hat{z}'_n$ based on the lowest fairness influence and then inquire about true labels $y'$: 
\STATE \(\qquad \qquad \qquad \qquad \{z'_n\} \stackrel{\text{inquiring}}{\longleftarrow} \{\hat{z}'_n\} \leftarrow \textsf{Top-$(r-|S_t|)$} (S_\text{original})\)
\STATE  $S_t \leftarrow S_t \cup \{z'_n ~|~ \infl{acc}(z'_n) \le 0, \infl{fair}(z'_n) \le 0\} $
\STATE $U \leftarrow U \cap S_t; \quad S_\text{original} \leftarrow S_\text{original} \cap S_t $
\ENDWHILE \\
\STATE Continue to train the model $f$ on the set $S_t$ via standard ERM. Obtain the updated model parameters $\mathbf{w}_{t+1}$. If the model's validation accuracy (on $Q_v$) $\text{VAL}_{t}$ does not meet the desired threshold $\text{VAL}_0$, reject the updated model.
\ENDFOR

\STATE \textbf{Output:} models $\{\mathbf{w}_t ~|~ \text{VAL}_t > \text{VAL}_0 - \epsilon \}$
\end{algorithmic}
\end{algorithm}

\paragraph{Proposed algorithm}
The full procedure is outlined in Algorithm \ref{alg:fair_influential_sampling}. 
Note that the tolerance $\epsilon$ is applied to monitor the performance drop in validation accuracy.
In Line 2, we initiate the process by training a classifier $f$ solely on dataset $P$, that is, performing a warm start. Subsequently, $T$-round sampling iterations are applied to acquire more examples to dataset $P$.
Following the iterative fashion, FIS guesses labels using Eq. (\ref{eq:strategy}) in Line 4. 
Then, we calculate the scores for proxy examples based on the accuracy and fairness influence using Eq. (\ref{eq:influence-of-prediction}) and Eq. (\ref{eq:influence-of-fairness}) respectively. 
In Lines 6-12,  we would opt for $r$ samples based on the influence scores, and inquire about the true labels of these examples.
However, due to the gap between proxy labels $\hat y'$ and true label $y$, the accuracy and fairness influences of the top-$r$ samples based on inquired true labels $z'$ may not necessarily satisfy the same conditions ($\infl{acc}(\hat z') \le 0, \infl{fair}(\hat z') \le 0$).
Therefore, we use a while loop to iteratively select the top-$(r-|S_t|)$ examples for labeling until we obtain $r$ samples whose fairness influences based on true labels meet the conditions.
Subsequently, in Line 11, we update sets $U$ and $S_{\text{original}}$ to prevent duplicate sampling.
In Line 13, we would continue training using new examples with true inquired labels from set $S_t$. 
We save the model parameters at each round as checkpoints $\mathbf{w}_t$. 
To avoid potential accuracy drops incurred by excessively large random perturbations, we exclusively choose and offer models for output whose validation accuracy exceeds the initial validation accuracy $\text{VAL}_0$. Although we propose a specific strategy for guessing labels, our algorithm is flexible and compatible with other labeling methods. A comparative analysis of computational costs is detailed in Appendix \ref{apx:runtime_complexity}.

\section{How more data improves fairness without harm?}\label{sec:theoretical_analysis}
In general, acquiring new data to supplement the original training dataset would potentially raise the distribution shift problem, affecting both accuracy and fairness.
In this section, from a distribution shift perspective view, we first present a generalization error bound (accuracy side, Theorem \ref{lemma:accuracy_bound}) and risk disparity bound (fairness side, Theorem \ref{thm:performance_gap_test}).
The theoretical results jointly provide a high-level key insight that controlling the negative impact of distribution shift on generalization error, which refers to the model accuracy, could allow for improving fairness without harm.
This theoretical insight validates the possibility of our sampling approach.

Without loss of generality, we discretize the whole distribution space and suppose that the train/test distributions are both drawn from a series of component distributions $\{\pi_1, \cdots, \pi_I\}$ \citep{feldman2020does}. %
Then, the empirical risk $\mathcal{R}_{P}(\mathbf{w})$ calculated over a training set $P$ can be reformulated by splitting samples based on the component distributions:
\begin{equation*}
\begin{aligned}
         \mathcal{R}_{P}(\mathbf{w}) 
        :=   \mathbb{E}_{z \in P}[\ell(\mathbf{w}, z)] 
          =  \sum\nolimits_{i=1}^{I}p^{(P)}(\pi=i) \cdot \mathbb{E}_{z\sim \pi_i} [\ell(\mathbf{w}, z)].
\end{aligned}
\end{equation*}
where $p^{(P)}(\pi=i)$ represents the frequencies of examples in $P$ drawn from component distribution $\pi_i$.
Then, we can define the measure of probability distance between two sets or distributions as $\text{dist}(\mathcal{A}, \mathcal{B}):= \sum_{i=1}^{I} | p^{(\mathcal{A})}(\pi=i) - p^{(\mathcal{B})}(\pi=i)|$. 
To reflect the implicit unfairness in the models, we introduce two basic assumptions in convergence analysis \citep{li2019fair}. 

\begin{assumption}
[$L$-Lipschitz Continuous] There exists a constant $L >0$,  for any $\mathbf{v}, \mathbf{w} \in \mathbb{R}^d$, $\mathcal{R}_{P}(\mathbf{v}) \leq \mathcal{R}_{P}(\mathbf{w}) + \langle \nabla \mathcal{R}_{P}(\mathbf{w}), \mathbf{v}- \mathbf{w} \rangle + \frac{L}{2}\lVert \mathbf{v}- \mathbf{w} \rVert^2_2$.
\label{assumption:lipschitz}
\end{assumption}

\begin{assumption}
[Bounded Gradient on Random Sample] The stochastic gradients on any sample $z$ are uniformly bounded, {\em i.e.},  $\mathbb{E}[||\nabla \mathcal{R}_{P}(\mathbf{w}_t, z)||^2] \leq G^2$, and training epoch $t\in [1, \cdots, T]$.
\label{assumption:bounded_gradient} %
\end{assumption}

Analogous to Assumption \ref{assumption:bounded_gradient}, we further make a mild assumption to bound the loss over the component distributions $\pi_i$ according to the corresponding model, that is, $\mathbb{E}_{z\sim \pi_i} [\ell(\mathbf{w}^P, z)] \leq G_P, \forall i \in I$, where $G_P$ is a bounding constant.  For completeness, we first analyze the upper bound of generalization error, specifically from the standpoint of distribution shifts. Omitted proof can be found in Appendix \ref{appendix:omitted_proof}.

\begin{theorem}
[Generalization error bound] Let $\text{dist}(\mathcal{P}, \mathcal{Q})$, $G_P$ be defined therein. With probability at least $1-\delta$ with $\delta \in (0,1)$, the generalization error bound of the model trained on dataset $P$ is
\begin{align*}
        \mathcal{R}_{\mathcal{Q}}(\mathbf{w}^P) \leq   \underbrace{G_P \cdot \text{dist}(\mathcal{P}, \mathcal{Q})}_{\text{ distribution shift}} + \sqrt{\frac{\log(4/\delta)}{2 |P|}} +  \mathcal{R}_{P}(\mathbf{w}^{P}).
\end{align*}
\label{lemma:accuracy_bound}
\end{theorem}
\vspace{-2ex}
Note that the generalization error bound is predominantly impacted by the shift in distribution, especially when we consider an overfitting model, i.e., the empirical risk $\mathcal{R}_{P}(\mathbf{w}^{P}) \to 0$.

\begin{theorem}
[Upper bound of risk disparity] Suppose  $\mathcal{R}_{\mathcal{Q}}(\cdot)$ follows Assumption \ref{assumption:lipschitz}. Let $\text{dist}(\mathcal{P}, \mathcal{Q})$, $G_P$, $\text{dist}(\mathcal{P}_k, \mathcal{Q}_k)$ and $\text{dist}(P_k, P)$ be defined therein. The initial learning rate $\eta_0$ satisfies $\eta_0^2 < \frac{1}{\sqrt{2}TL}$, where $T$ is the number of training epochs.
 Given model $\mathbf{w}^P$ and $\mathbf{w}^k$ trained exclusively on group $k$'s data $P_k$,  with probability at least $1-\delta$ with $\delta \in (0,1)$, then the upper bound of  risk disparity is
\begin{multline}\label{eq:performance_gap_test}
    \mathcal{R}_{\mathcal{Q}_k}(\mathbf{w}^{P})- \mathcal{R}_{\mathcal{Q}}(\mathbf{w}^P)
 \leq \underbrace{G_k \cdot \text{dist}(\mathcal{P}_k, \mathcal{Q}_k) +  G_P \cdot \text{dist}(\mathcal{P}, \mathcal{Q})}_{\text{ distribution shift}} + \underbrace{4 L^2  G^2 \cdot  \text{dist}(P_k, P)^2}_{\text{group gap}}  + \Upsilon
\end{multline}
where  
{\small $\Upsilon =\sqrt{\frac{\log(4/\delta)}{2|P|}} +  \sqrt{\frac{\log(4/\delta)}{2|P_k|}} + \varpi  + \varpi_k.$}
Note that $\mathbb{E}_{z\sim \pi_i} [\ell(\mathbf{w}^k, z)] \leq G_k$,  $\varpi = \mathcal{R}_{P}(\mathbf{w}^P) - \mathcal{R}_{\mathcal{Q}}^*(\mathbf{w}^{\mathcal{Q}})$ and $\varpi_k = \mathcal{R}_{P_k}(\mathbf{w}^{k})  - \mathcal{R}_{\mathcal{Q}_k}^*(\mathbf{w}^{\mathcal{Q}_k})$. $\varpi$ and $\varpi_k$ can be regarded as constants because  $\mathcal{R}_{P}(\mathbf{w}^P)$ and $\mathcal{R}_{P_k}(\mathbf{w}^{k}) $ correspond to the empirical risks, $\mathcal{R}_{\mathcal{Q}}^*(\mathbf{w^{\mathcal{Q}}})$ and $\mathcal{R}_{\mathcal{Q}_k}^*(\mathbf{w^{\mathcal{Q}_k}})$ represent the ideal minimal empirical risk of model $\mathbf{w}^{\mathcal{Q}}$ trained on distribution $\mathcal{Q}$ and $\mathcal{Q}_k$, respectively.
\label{thm:performance_gap_test}
\end{theorem}

\paragraph{Interpretations of Theorem~\ref{thm:performance_gap_test}} 
Eq.~(\ref{eq:performance_gap_test}) illustrates several aspects that induce unfairness. (1) \textit{Group biased data}. For group-level fairness, the more balanced the data is, the smaller the risk disparity would be;
(2) \textit{distribution shift}. For source/target distribution, the closer the distributions are, the smaller the performance gap would be; (3) \textit{Data size}. For training data size, a larger data size (potentially eliminating data bias across groups) would lead to a smaller performance gap.

\paragraph{Main observation}
Theorem~\ref{lemma:accuracy_bound} underscores how the generalization error is impacted by distribution shifts.  Theorem~\ref{thm:performance_gap_test} implies that the risk disparity is essentially influenced by the distribution shift and the inherent group gap term.
In practice, approaches that mitigate the group gap, such as imposing fairness regularizers, acquiring new data, or reweight the training data samples \citep{kamiran2012data}, will inevitably incur additional distribution shifts between the training and test data.
The incurred distribution shift further leads to a performance drop due to the generalization error in Theorem~\ref{lemma:accuracy_bound}.
Nonetheless, one theoretical insight is that if one can control the negative impacts of potential distribution shifts through generalization error while implementing fairness-enhancing strategies, it is possible to achieve the goal of improving fairness without causing harm.
This high-level insight supports the effectiveness of our proposed sampling approach, in which we acquire new data to reduce the group gap through fairness components while preventing the potential adverse impacts of distribution shifts using the accuracy influence component.

\section{Empirical results}\label{sec:exp}

In this section, we empirically demonstrate the disparate impact across groups and present the effectiveness of the proposed Fair Influential Sampling method to mitigate the disparity.

\subsection{Experimental setup}
We evaluate the performance of our algorithm on three real-world datasets across three different modalities: CelebA \citep{liu2015deep}, UCI Adult \citep{asuncion2007uci} and Compas \citep{angwin2016machine}.
We implement the fairness loss $\phi(\cdot)$ based on three common group fairness metrics: difference of demographic parity (\texttt{DP}), difference of equality of opportunity (\texttt{EOp}), and difference of equal odds (\texttt{EOd}). 
We compare our method with five baselines: 1) Base (ERM): directly train the model on the training dataset $P$; 2) Random: train the model on dataset $P$ and randomly sampled data from $U$ with inquired true labels; 3) BALD \citep{branchaud2021can}: active sampling according to the mutual information; 4) ISAL \citep{liu2021influence}: selects unlabeled examples based on the calculated influence in an active learning setting. We apply model predictions as pseudo-labels;  5) Just Train Twice (JTT) \citep{liu2021just}: reweighting those misclassified examples for re-training. Here, we examine a weight of 20 for misclassified examples, marked as JTT-20.
Recall that we present the average result of the classifier $\mathbf{w}_t$ outputs from Algorithm \ref{alg:fair_influential_sampling}. 
The general term ``\textit{fairness violation}'' is utilized to quantify the absolute differences based on fairness metrics, such as DP and EOd.
More details on datasets and hyper-parameters are provided in Appendix \ref{appendix:dataset_setting}.

\subsection{Main results}

Note that all the experimental results presented subsequently are from three independent trials, each conducted with distinct random seeds. We present the primary results as tuples in the form (\texttt{test\_accuracy}, \texttt{fairness\_violation}) to facilitate comparison of the fairness-accuracy tradeoff.
Due to space limits, we provide a full version of the experimental results (tables) in Appendix \ref{appendix:full_results}.

\paragraph{Results on image datasets}
Initially, we train a vision transformer using a patch size of $(8,8)$ on the CelebA face attribute dataset \citep{liu2015deep}. We select four binary classification targets, including \texttt{Smiling}, \texttt{Attractive}, \texttt{Young}, and \texttt{Big Nose}. The sensitive attribute is \texttt{gender}. 2\% of the labeled data is allocated for training, while the remaining 98\% is reserved for sampling purposes. Then, the test dataset is split into two independent portions: a new test set and a validation set, with 10\% of the test data randomly designated as the hold-out validation set.
For ease of computation, only the last two layers of the model are used to calculate the influence of accuracy and fairness components.
For Figure \ref{fig:all_results}, one main observation is that FIS outperforms baselines with a significant margin on three fairness metrics while maintaining the same accuracy level. This improvement, as indicated in Theorem \ref{thm:performance_gap_test}, can be attributed to FIS assigning priority to new examples based on the fairness influence, then avoiding accuracy reduction via their accuracy influence.

\begin{figure}
    \centering
    \includegraphics[width=1\linewidth]{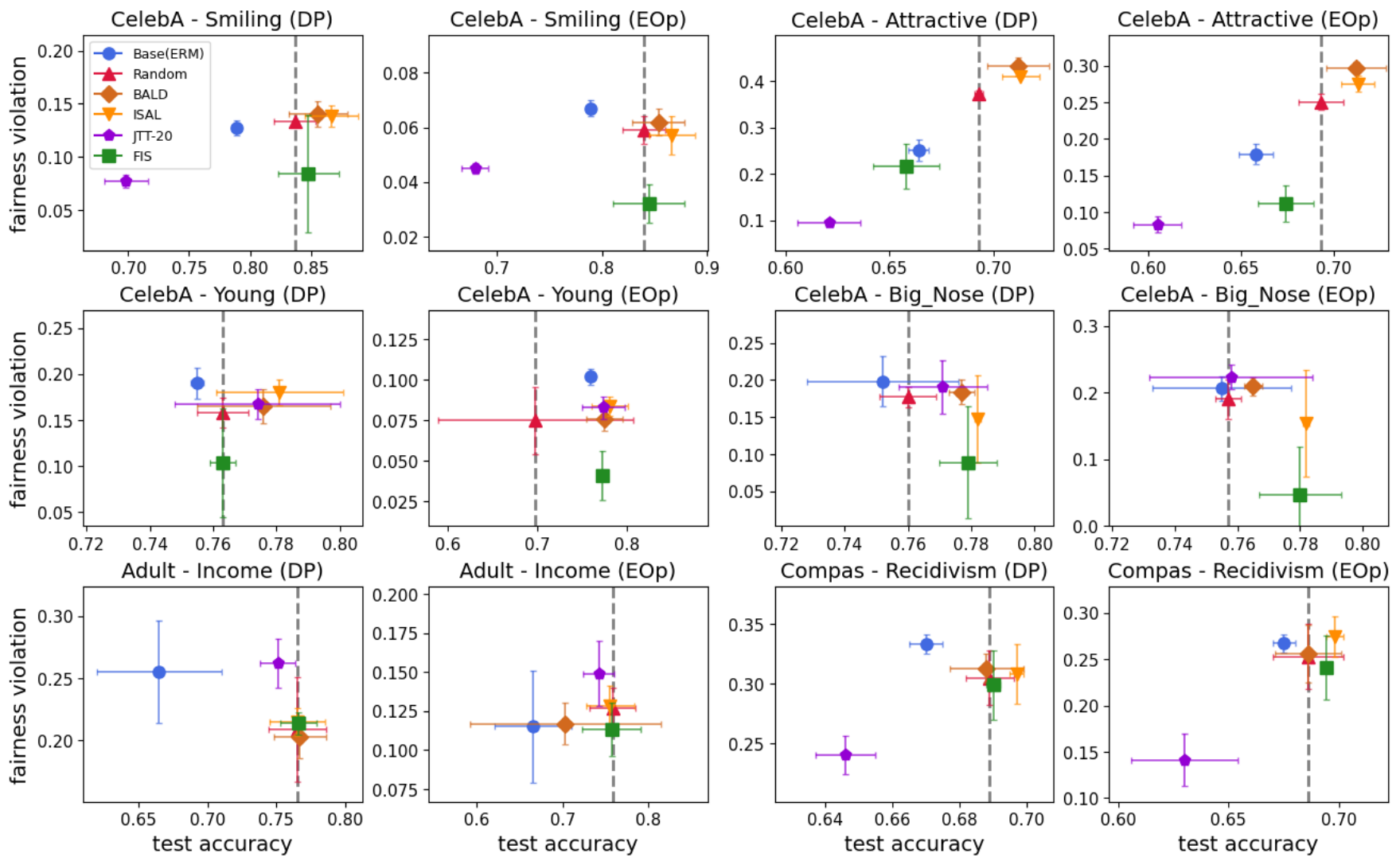}
    \caption{Main results on CelebA, Adult and Compas datasets. The Y axis shows \texttt{fairness\_violation}; X axis denotes \texttt{test\_accuracy}.
     \textbf{CelebA:} Four binary targets: \texttt{Smiling}, \texttt{Attractive}, \texttt{Young}, and \texttt{Big\_Nose}; Sensitive attribute: \texttt{gender}. 
     \textbf{Adult}: Binary target: \texttt{Income}; Sensitive attribute: \texttt{Age}.
     \textbf{Compas}: Binary target: \texttt{Recidivism}; Sensitive attribute: \texttt{Race}. We select two fairness metrics \texttt{DP} and \texttt{Eop} to measure fairness violations for each setting. The vertical dotted line at the random baseline accuracy helps easily identify which results achieve fairness without sacrificing performance (accuracy).
     }
    \label{fig:all_results}
\end{figure}

\paragraph{Results on tabular datasets}
Next, we work with multi-layer perceptron (MLP) with two layers trained on the Adult \citep{asuncion2007uci} and Compas dataset \citep{angwin2016machine}, respectively. We select \texttt{age} as the sensitive attribute for the Adult dataset and \texttt{race} for the Compas dataset.
For two datasets, we resample the data to balance the class and group membership \citep{chawla2002smote}. The whole dataset is split into training and test sets at a 4:1 ratio. Then, we randomly re-select 20\% of the training set for initial training and the remaining 80\% for sampling. Also, 20\% examples of the test set are selected to form a validation set.
The MLP model is a two-layer ReLU network with a hidden size of 64.
We utilize the whole model parameters to compute the influence of accuracy and fairness for examples.
Figure \ref{fig:all_results} summarizes the main results of the Adult and Compas datasets. On the Adult dataset, we observe that our sampling method achieves the lowest violation for equality of opportunity and has a comparable performance for the DP metric. Besides, our algorithm achieves a much better accuracy-fairness trade-off than other baselines on the Compas dataset. JTT-20 achieves a lower fairness violation with the price of a significant accuracy drop compared to other baselines.

\subsection{Ablation study}
\paragraph{What is the impact of label budgets?} Here, we examine how the varying label budgets $r$ affect the trade-off between accuracy and fairness. For ease of comparison, we adhere to a consistent label budget per round to illustrate their respective impacts. As shown in Figure \ref{table:validation_size_celeba}, our method consistently preserves a lower fairness violation than the BALD and ISAL baselines with a similar test accuracy. While we observe that the JTT-20 algorithm can achieve a near-zero fairness violation under a limited budget on the CelebA dataset, we argue that the model accuracy is rather uninformative (about 50\%).
More empirical results can be found in Appendix \ref{appendix:impact_of_label_budget}.

\paragraph{How does the validation set size affect the performance?} We further explore the impact of adjusting the validation set size on our algorithm's performance. We present the test accuracy and fairness violations across different validation set sizes on the CelebA dataset. Note that the default validation set size is set to 1\% of the whole dataset size.
In particular, the minimum scale of the validation set size is set to $\sfrac{1}{5}\times$ (nearly 400 CelebA images). The results in Figure~\ref{table:validation_size_celeba} indicate that our algorithm still retains the test accuracy and fairness violation when we vary the validation set size.
Additional results conducted on Adult and Compas datasets are provided in Appendix \ref{appendix:impact_of_validaion_size}.

\begin{figure}[htp]
    \centering
    \begin{minipage}{0.49\textwidth}
        \centering
        \includegraphics[width=\linewidth]{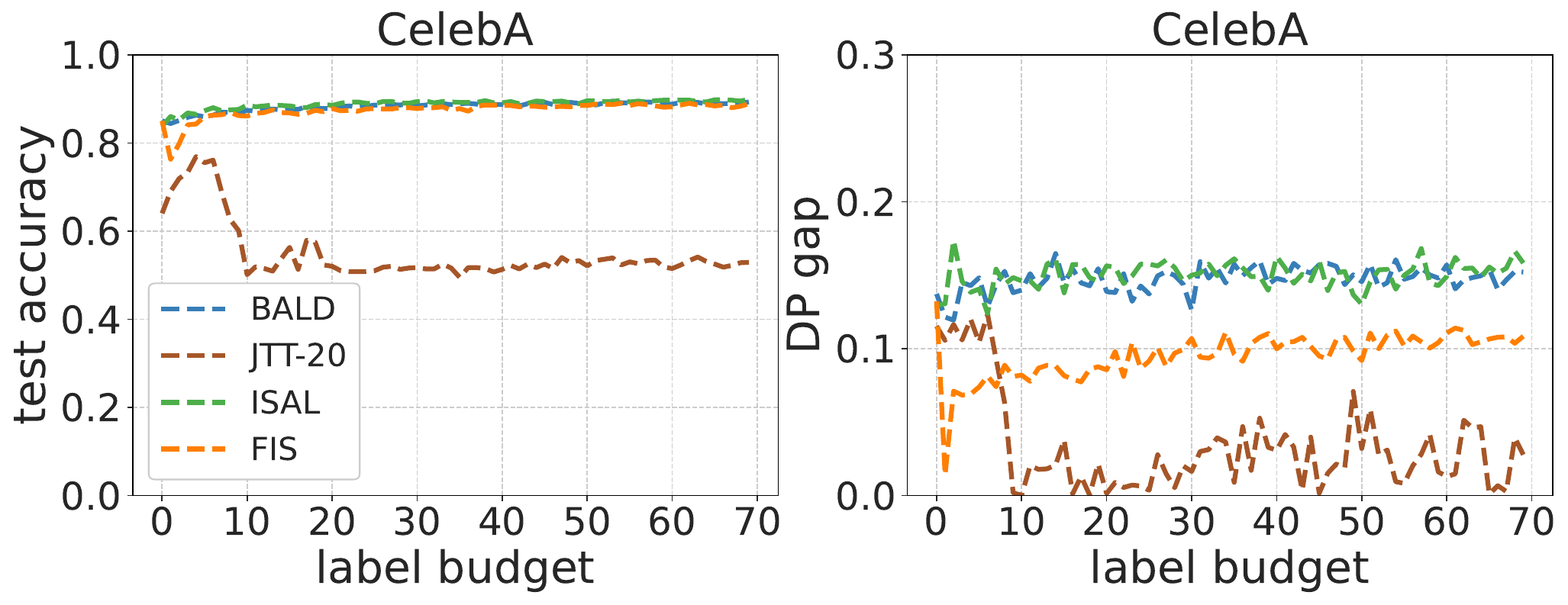}
    \end{minipage}%
    \hspace{1ex}
    \begin{minipage}{0.49\textwidth}
        \centering
        \resizebox{\linewidth}{!}{
        \begin{tabular}{c ccc}
        \toprule
            ~ &   \multicolumn{3}{c}{\textbf{CelebA - Smiling}} \\
            \cmidrule{2-4}
            ~ & (Test\_acc$\uparrow$, DP$\downarrow$) & (Test\_acc$\uparrow$, EOp$\downarrow$) & (Test\_acc$\uparrow$, EOd$\downarrow$) \\
            \midrule    
            $1\times$ &  (0.848, 0.084) & (0.876, 0.031) & (0.864, 0.030) \\
            $\sfrac{1}{2}\times$ & (0.872, 0.105) & (0.891, 0.042) & (0.880, 0.028)\\
            $\sfrac{1}{5}\times$ & (0.872, 0.117) & (0.863, 0.057) & (0.886, 0.028)\\
            \bottomrule
        \end{tabular}}
    \end{minipage}
    \captionof{figure}{\textbf{Left:} The impact of label budgets on the test accuracy \& DP gap in the \textbf{CelebA} dataset. \textbf{Right:} The impact of the validation set size on (\texttt{test\_accuracy}, \texttt{fairness\_violation}) results. }
    \label{table:validation_size_celeba}
\end{figure}

\section{Conclusions and limitations}\label{sec:conclusion}

In this work, we are interested in facilitating ML models that mitigate group fairness disparity without harming model accuracy. 
To achieve this, different from current active sampling methods, we propose a tractable fair influential sampling method FIS, which avoids the need for training group annotations during the sampling or training phase, thereby preventing the potential exposure of sensitive information.
In particular, this algorithm acquires data samples from a large dataset for training based on the influence of fairness and accuracy evaluated on the auxiliary validation dataset.
Empirical experiments on real-world data validate the efficacy of our proposed method.

Nonetheless, we recognize that our method has limitations. Although the proposed sampling algorithm does not require sensitive attribute information from the massive data, it relies on a clean and informative validation set that contains the sensitive attributes of data examples. We consider this as a reasonable requirement in practice, given the relatively modest size of the validation set.
Besides, one potential concern is that the sampling strategy may become inefficient when the collected validation set is noisy.
However, this practical issue can be heavily alleviated by using loss correction methods \citep{patrini2017making, zhu2023weak} or noise-tolerant fairness loss functions to rectify the error terms \citep{natarajan2013learning, wei2020combating, feng2021can}. 
In future work, we aim to address this limitation by developing more robust sampling strategies that can perform effectively even with noisy validation sets.

\paragraph{Acknowledgment}
This work is partially supported by the National Science Foundation (NSF) under grants IIS-2143895, IIS-2040800, IIS-2416896, and CCF-2023495. Additionally, Pang and Qian were partially supported by NSF Grants 2322919, 2420632, 2426031, and 2114113. 

\clearpage
\newpage

\section*{Broader Impact}
This paper presents work whose goal is to advance the field of fairness in machine learning. There are many potential societal consequences of our work. While the proposed algorithm does intend to infer sensitive attributes of data examples that may be protected by privacy regulations, it does not necessitate direct access to such sensitive information.  On the other hand, our work can serve as an effective approach leading to mitigating the disparity with a limited annotation budget. We have thoroughly examined the potential ethical implications of our work and, based on our assessment, do not identify any issues that we deem necessary to emphasize here specifically.

\bibliographystyle{plain}
\bibliography{ref}

\newpage
\appendix
\onecolumn

\section*{Appendix}\label{sec:appendix}

The Appendix is organized as follows.

\squishlist
    \item Section \ref{appendix:related_work} provides more details of the related work.
    \item Section \ref{appendix:connection_between_risk_and_EOd} explores the relationship between our proposed fairness notion risk disparity and common fairness metrics, such as DP and EOd. In particular, we provide the full proof for Proposition \ref{property:connections_to_DP_EOd}.
    \item Section \ref{appendix:fis_details} provides a detailed analysis of the FIS algorithm including 1) evaluating first-order influence estimations against real influence, 2) a comparative analysis of computational costs, and 3) the exploration of the labeling strategies. 
    \item Section \ref{appendix:omitted_proof} presents the full proofs for the Lemmas and Theorems shown in Section \ref{sec:method} and Section \ref{sec:theoretical_analysis}.
    \item Section \ref{appendix:experimental_settings} presents detailed descriptions of all datasets, corresponding parameter settings, and full version of the experimental results. In particular, to demonstrate FIS's advantage at the same levels of information, we introduce a new baseline called \texttt{Random+Val}.
    \item Section \ref{appendix:correlation_non-demongraphic_and_demographic} discusses the privacy concern potentially caused by the correlations between non-demographic and demographic features.
\squishend

\section{More details of related work}
\label{appendix:related_work}
\textbf{Active learning}
The core idea of active learning is to rank unlabeled instances by developing specific significant measures,
including uncertainty \citep{lewis1994heterogeneous, lindenbaum2004selective}, representativeness \citep{dasgupta2008hierarchical}, inconsistency \citep{wang2012inconsistency}, variance \citep{hoi2006large}, and error \citep{roy2001toward}. Each of these measures has its criterion to determine the importance of instances for enhancing classifier performance. For example, uncertainty considers the most important unlabeled instance to be the nearest one to the current classification boundary.
A related line of work \citep{liu2021influence, wang2022boosting, hammoudeh2022training, xie2023active} concentrates on ranking unlabeled instances based on the influence function. Compared to these studies with a focus on prediction accuracy, our work poses a distinct challenge taking into account fairness violations. We note that adopting a particular sampling strategy can lead to distribution shifts between the training and testing data. What's worse, even though fairness is satisfied within the training dataset, the model may still exhibit unfair treatments on the test dataset due to the distribution shift. Therefore, it becomes imperative for the sampling approach to also account for its potential impacts on fairness.

\textbf{Pareto optimality} In the field of fairness in machine learning, Pareto optimality indicates the theoretical frontier of fairness accuracy tradeoff, meaning that fairness can not be improved without worsening model accuracy. Existing methods primarily focus on seeking the Pareto frontier of fairness accuracy tradeoff for the neural network classifier, instead of reaching a better one.
For example, Balashankar et al. \citep{balashankar2019fair} first analyzes the Pareto optimality for classifiers within the context of fairness constraints.
Wang et al. \citep{wang2021understanding} explores the multi-dimensional Pareto frontiers of the fairness-accuracy tradeoff in the multi-task setting. Another works \citep{martinez2020minimax, martinez2019fairness} target to obtain a Pareto efficient classifier to reduce worst-case group risks by formulating group fairness as a multiple-objective optimization problem, where each group risk is an objective function.

\textbf{Fair classification}
The fairness-aware learning algorithms, in general, can be categorized into pre-processing, in-processing, and post-processing methods. Pre-processing methods typically reweigh or distort the data examples to mitigate the identified biases \citep{asudeh2019assessing,azzalini2021fair,tae2021slice, sharma2020data,celis2020data,chawla2002smote,zemel2013learning,chen2018my}.
More relevant to us is the \textit{importance reweighting}, which assigns different weights to different training examples to ensure fairness across groups \citep{kamiran2012data, jiang2020identifying, diesendruck2020importance,choi2020fair, qraitem2023bias, li2019repair}. %
Our proposed algorithm bears similarity to a specific case of importance reweighting, particularly the 0-1 reweighting applied to newly added data. The main advantage of our work, however, lies in its ability to operate without needing access to the sensitive attributes of either the new or training data. Other parallel studies utilize importance weighting to learn a complex fair generative model in a weakly supervised setting \citep{diesendruck2020importance, choi2020fair}, or to mitigate representation bias in training datasets \citep{li2019repair}. 
Post-processing methods typically enforce
fairness on a learned model through calibration \citep{feldman2015computational, feldman2015certifying, hardt2016equality}. Although this approach is likely to decrease the disparity of the classifier, by decoupling the training from the fairness enforcement, this procedure may not lead to the best trade-off between fairness and accuracy \citep{woodworth2017learning,pleiss2017fairness}. 
In contrast, our work can achieve a better trade-off between fairness and accuracy, because we reduce the fairness disparity by mitigating the adverse effects of distribution shifts on generalization error. 
Additionally, these post-processing techniques necessitate access to the sensitive attribute during the inference phase, which is often not available in many real-world scenarios.

\section{Understanding risk disparity through common fairness metrics}
\label{appendix:connection_between_risk_and_EOd}

Here, we provide a brief proof to illustrate Proposition \ref{property:connections_to_DP_EOd}, that is, the relationship between risk parity and common fairness metrics, such as DP and EOd. 
For completeness, here we provide a more detailed version of Proposition \ref{property:connections_to_DP_EOd}, as well as its proof.

\textbf{Proposition \ref{property:connections_to_DP_EOd}.} Consider a binary classification scenario involving two demographic groups $S\in \{k, k'\}$.
When two groups are balanced, i.e., $\frac{\mathbb{P}(Y=y|S=k)}{\mathbb{P}(Y=y|S=k')} =1$, where $\hat{Y}$ denotes the predicted label,  risk disparity can serve as a lower bound for fairness disparities based on EOd. Similarly, if the group sufficiency ratio can be calibrated by 1, i.e., $\mathbb{P}(Y=y|S=k, \hat{Y}=y) = \mathbb{P}(Y=y|S=k', \hat{Y}=y) = R$ where $R$ is calibration score, risk disparity can be formulated as a lower bound for DP-based fairness disparity.

\begin{proof}
Consider a scenario involving two demographic groups $S\in \{k, k'\}$, alongside a 0-1 loss function.

\paragraph{Fairness metric EOd} For EOd, the risk disparity can be reformulated as 
{\small
\begin{equation*}
\begin{aligned}
& |\mathcal{R}_{Q_k}(\mathbf{w}^P)-\mathcal{R}_{Q_{k^{\prime}}}(\mathbf{w}^P)| \\
= & \left| \frac{1}{|Q_k|} \sum\nolimits_{\left(x_n, y_n\right) \in Q_k} \mathbb{I}\left(\hat{y}_n \neq y_n\right)-\frac{1}{| Q_{k^{\prime}}|} \sum\nolimits_{\left(x_n, y_n\right) \in Q_{k^{\prime}}} \mathbb{I}\left(\hat{y}_n \neq y_n\right)\right| \\
= & \left|\mathbb{P}(\hat{Y} \neq Y \mid S=k)-\mathbb{P}(\hat{Y} \neq Y \mid S=k^{\prime})\right| \\
= &  \left|\sum_{y}\left(\mathbb{P}\left(Y=y \mid S=k^{\prime}\right) \mathbb{P}(\hat{Y}=y \mid S=k^{\prime}, Y=y)-\mathbb{P}(Y=y \mid S=k) \mathbb{P}(\hat{Y}=y \mid S=k, Y=y) \right) \right| \\
= &  \left|\sum_{y}\mathbb{P}(Y=y\mid S=k') \cdot \left(  \mathbb{P}(\hat{Y}=y \mid S=k^{\prime}, Y=y)- \omega_{\text{EOd}}(y) \mathbb{P}(\hat{Y}=y \mid S=k, Y=y)\right)\right| \\
=  & \bigg| \mathbb{P}(Y=1|S=k')\cdot \left(  \mathbb{P}(\hat{Y}=1 \mid S=k^{\prime}, Y=1)- \omega_{\text{EOd}}(y=1) \cdot \mathbb{P}(\hat{Y}=1 \mid S=k, Y=1) \right)  \quad \quad \text{\# binary case}\\ 
& \quad +\mathbb{P}(Y=0|S=k')\cdot \left(  \mathbb{P}(\hat{Y}=0 \mid S=k^{\prime}, Y=0)- \omega_{\text{EOd}}(y=0) \cdot \mathbb{P}(\hat{Y}=0 \mid S=k, Y=0) \right)  \bigg| \\
=  & \bigg| \mathbb{P}(Y=1|S=k')\cdot \left(  \mathbb{P}(\hat{Y}=1 \mid S=k^{\prime}, Y=1)- \omega_{\text{EOd}}(y=1) \cdot \mathbb{P}(\hat{Y}=1 \mid S=k, Y=1) \right)  \quad \quad \\ 
& \quad +\mathbb{P}(Y=0|S=k')\cdot \left( \omega_{\text{EOd}}(y=0) \cdot \mathbb{P}(\hat{Y}=1 \mid S=k, Y=0) - \mathbb{P}(\hat{Y}=1 \mid S=k^{\prime}, Y=0) \right) \\
& \quad + \mathbb{P}(Y=0|S=k') \cdot (1- \omega_{\text{EOd}}(y=0))\bigg| \\
=  & \bigg| \mathbb{P}(Y=1|S=k')\cdot \left(  \mathbb{P}(\hat{Y}=1 \mid S=k^{\prime}, Y=1)-  \mathbb{P}(\hat{Y}=1 \mid S=k, Y=1) \right)  \quad \quad \\ 
& \quad -\mathbb{P}(Y=0|S=k')\cdot \left( \mathbb{P}(\hat{Y}=1 \mid S=k^{\prime}, Y=0) - \mathbb{P}(\hat{Y}=1 \mid S=k, Y=0)\right) \bigg| \quad \quad \text{\# balanced groups}\\
\leq  & \mathbb{P}(Y=1|S=k')\cdot \bigg|  \mathbb{P}(\hat{Y}=1 \mid S=k^{\prime}, Y=1)-  \mathbb{P}(\hat{Y}=1 \mid S=k, Y=1) \bigg|  \\ 
& \quad + \mathbb{P}(Y=0|S=k')\cdot \bigg| \mathbb{P}(\hat{Y}=1 \mid S=k^{\prime}, Y=0) - \mathbb{P}(\hat{Y}=1 \mid S=k, Y=0)\bigg| \\
\leq  &\underbrace{  \sum_{y \in \{0,1\}} \bigg|\mathbb{P}(\hat{Y}=1 \mid S=k^{\prime}, Y=y)-  \mathbb{P}(\hat{Y}=1 \mid S=k, Y=y) \bigg| }_{\text{ EOd-based fairness disparity}}
\end{aligned}
\end{equation*}
}

where we define $\omega_{\text{EOd}}(y) := \frac{\mathbb{P}(Y=y|S=k)}{\mathbb{P}(Y=y|S=k')}$, serving as the bias weight. Here, we make a mild assumption that the two demographic groups are balanced, i.e., $\omega_{\text{EOd}}(y)=1$. Then, we can see that the last item measures the fairness disparity based on EOd for binary classification problems. Thus, we can claim that reducing risk disparity can promote the fairness metric EOd.

\paragraph{Fairness metric DP} Similarly, for DP, we formulate the risk disparity as 
{\small
\begin{equation*}
\begin{aligned}
& |\mathcal{R}_{Q_k}(\mathbf{w}^P)-\mathcal{R}_{Q_{k^{\prime}}}(\mathbf{w}^P)| \\
= & \left| \frac{1}{\left|Q_k\right|} \sum\nolimits_{\left(x_n, y_n\right) \in Q_k} \mathbb{I}\left(\hat{y}_n \neq y_n\right)-\frac{1}{| Q_{k^{\prime}}|} \sum\nolimits_{\left(x_n, y_n\right) \in Q_{k^{\prime}}} \mathbb{I}\left(\hat{y}_n \neq y_n\right)\right| \\
= & \left|\mathbb{P}(\hat{Y} \neq Y \mid S=k)-\mathbb{P}(\hat{Y} \neq Y \mid S=k^{\prime})\right| \\
= &  \left|\sum_{y}\left(\mathbb{P}(\hat{Y}=y \mid S=k^{\prime}) \mathbb{P}(Y=y \mid S=k^{\prime}, \hat{Y}=y)-\mathbb{P}(\hat{Y}=y \mid S=k) \mathbb{P}(Y=y \mid S=k, \hat{Y}=y) \right) \right| \\
= &  \left|\sum_{y} \mathbb{P}(Y=y|S=k', \hat{Y}=y) \cdot \left(  \mathbb{P}(\hat{Y}=y \mid S=k^{\prime})- \omega_{\text{DP}}(y) \mathbb{P}(\hat{Y}=y \mid S=k)\right)\right| \\
=  & \bigg|\mathbb{P}(Y=1|S=k', \hat{Y}=1) \cdot \left(  \mathbb{P}(\hat{Y}=1 \mid S=k^{\prime})- \omega_{\text{DP}}(y=1) \cdot \mathbb{P}(\hat{Y}=1 \mid S=k) \right)  \quad \quad \text{\# binary case}\\ 
& \quad +\mathbb{P}(Y=0|S=k', \hat{Y}=0) \cdot  \left(  \mathbb{P}(\hat{Y}=0 \mid S=k^{\prime})- \omega_{\text{DP}}(y=0)  \cdot \mathbb{P}(\hat{Y}=0 \mid S=k) \right)  \bigg| \\
=  & \bigg|\mathbb{P}(Y=1|S=k', \hat{Y}=1) \cdot \left(  \mathbb{P}(\hat{Y}=1 \mid S=k^{\prime})- \omega_{\text{DP}}(y=1) \cdot \mathbb{P}(\hat{Y}=1 \mid S=k) \right)  \quad \quad \\ 
& \quad +\mathbb{P}(Y=0|S=k', \hat{Y}=0) \cdot  \left( \omega_{\text{DP}}(y=0)  \cdot \mathbb{P}(\hat{Y}=1 \mid S=k) - \mathbb{P}(\hat{Y}=1 \mid S=k^{\prime}) \right) \\
& \quad + \mathbb{P}(Y=0|S=k', \hat{Y}=0) \cdot (1- \omega_{\text{DP}}(y=0))  \bigg| \quad \quad \quad \quad \text{\# Calibrating group sufficiency} \\
=  & \bigg|\mathbb{P}(Y=1|S=k', \hat{Y}=1) \cdot \left(  \mathbb{P}(\hat{Y}=1 \mid S=k^{\prime})- \mathbb{P}(\hat{Y}=1 \mid S=k) \right)  \quad \quad \\ 
& \quad +\mathbb{P}(Y=0|S=k', \hat{Y}=0) \cdot  \left( \mathbb{P}(\hat{Y}=1 \mid S=k) - \mathbb{P}(\hat{Y}=1 \mid S=k^{\prime}) \right)  \bigg| \\ 
=  & \bigg| \mathbb{P}(Y=1|S=k', \hat{Y}=1) - \mathbb{P}(Y=0|S=k', \hat{Y}=0)\bigg| \cdot \bigg | \mathbb{P}(\hat{Y}=1 \mid S=k^{\prime})- \mathbb{P}(\hat{Y}=1 \mid S=k) \bigg| \\
\leq   &  \underbrace{\bigg | \mathbb{P}(\hat{Y}=1 \mid S=k^{\prime})- \mathbb{P}(\hat{Y}=1 \mid S=k) \bigg| }_{\text{DP-based fairness disparity}}
\end{aligned}
\end{equation*}
}

where we define $\omega_{\text{DP}}(y) := \frac{\mathbb{P}(Y=y|S=k, \hat{Y}=y)}{\mathbb{P}(Y=y|S=k', \hat{Y}=y)}$. In fact, $\omega_{\text{DP}}(y)$  measures the group sufficiency ratio \citep{shui2022learning}.
Note that the group sufficiency is closely related to the idea of calibration \citep{barocas2023fairness}. Thus, we make a mild assumption that the group sufficiency ratio $\omega_{\text{DP}}(y) $ can be calibrated by 1, i.e., $\mathbb{P}(Y=y|S=k, \hat{Y}=y) = \mathbb{P}(Y=y|S=k', \hat{Y}=y) = R$, where $R$ is a calibration score. 
When two demographic groups are balanced, i.e., $\omega_{\text{DP}}(y)=1$, we can see that the last inequality indicates the DP-based fairness disparity for binary classification problems. Therefore, we can also claim that the fairness metric DP can also be encouraged when reducing the risk disparity.

\end{proof}

\section{Detailed analysis of the fair influential sampling algorithm}\label{appendix:fis_details}

\subsection{Evaluating first-order influence estimations against real influence}\label{appendix:influence_score}
Recall that the influence score derived in this paper primarily utilizes a first-order approach. Here, we will demonstrate how accurate the first-order estimation of the influence is in comparison to the real influence.
\begin{figure}
    \centering
    \includegraphics[width=0.5\linewidth]{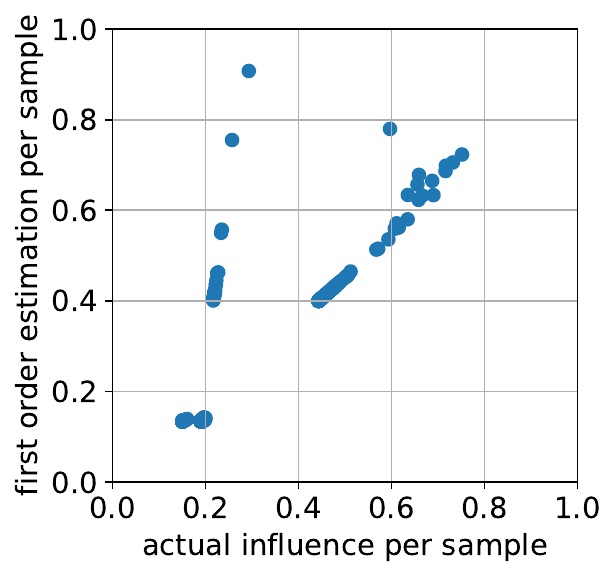}
    \caption{  We validate how accurate the first-order estimation of the influence is in comparison to the real influence. The $x$-axis represents the actual influence per sample, and the $y$-axis represents the estimated influence. We observe that while some of the examples are away from the diagonal line (which indicates the estimation is inaccurate), the estimated influences for most of the data samples are very close to their actual influence values.}
    \label{fig:effect-of-first-order}
\end{figure}

\subsection{Comparative analysis of computational costs} \label{apx:runtime_complexity}
Recall that the proposed algorithm FIS needs to pre-calculate the accuracy loss and fairness loss for evaluating the performance of a certain example.
However, the extra computation cost is comparable to the cost of traditional model training. Note that the main extra computation cost in FIS (Algorithm \ref{alg:fair_influential_sampling}) mainly results from model gradients.
Let $p$ denote the number of model parameters, then the cost for computing the gradients is $O(p)$ per sample. 
Specifically, in each round that involves sampling, we need to calculate three parts of gradients: the gradients of $|U|$ unlabeled instances, the average gradient of $|Q_v|$ validation instances w.r.t. accuracy loss, and the gradient of $|Q_v|$ validation instances w.r.t. fairness loss. Note that in general, $|Q_v| \ll |U|$.
In practical implementation, to speed up the calculation of gradients over $|U|$ instances, we randomly sample 0.2-0.5\% of the unlabeled dataset in each sampling batch. Additionally, we can increase the number of newly selected examples for each round ($r$) to save computation costs. In our experiments, we usually have 10-20 sampling rounds. For instance,  running one experiment for the CelebA dataset on a single GPU roughly requires about 4 hours.

\subsection{Exploration of the labeling strategies}
\label{appendix:labeling_strategy}

Note that we provide a strategy that employs lowest-influence labels for annotating labels.
This section will explore an alternative strategy that employs model predictions for the purpose of labeling. For completeness, we outline the two proposed labeling strategies as follows.

\begin{enumerate}[label=\textbf{Strategy \Roman*}, leftmargin=*, align=left, nosep]
    \item\label{method:strategy:min-infl} \textbf{Use low-influence labels.}
    That is, $\hat y =  \argmin_{k\in[K]} ~ |\infl{acc}(x', k) |$, which corresponds to using the most uncertain point.
    \item\label{method:strategy:max-pred} \textbf{Rely on model prediction.}
    That is, $\hat y =  \argmax_{k\in[K]} f(x;\mathbf{w})[k]$, where $f(x;\mathbf{w})[y]$ indicates the model's prediction probability on label $y$.
\end{enumerate}

\begin{remark}
    Suppose that the model is trained with cross-entropy loss. The labels obtained through \ref{method:strategy:max-pred} are sufficient to minimize the influence of the prediction component, i.e., $\infl{acc}(x', k)$. That said, the \ref{method:strategy:max-pred} will produce similar labels as \ref{method:strategy:min-infl}.
\end{remark}

\begin{proof}
    Based on the definition of the influence of the prediction component, as delineated in Eq. (\ref{eq:influence-of-prediction}), it becomes evident that the most uncertain points are obtained when the proxy labels closely align with the true labels. Consequently, the model predictions used in Strategy II also approximate the true labels to minimize the cross-entropy loss. Thus, in a certain sense, Strategy I and Strategy II can be considered equivalent.
\end{proof}

\section{Omitted proofs}
\label{appendix:omitted_proof}
In this section, we present complete proofs for the lemmas and theorems in Section \ref{sec:method} and \ref{sec:theoretical_analysis}, respectively.

\subsection{Proof of Lemma \ref{lemma:influence_of_predictions}}

\begin{varthm}[Lemma \ref{lemma:influence_of_predictions}]
    The influence of predictions on the validation dataset $Q_v$ can be denoted by
\begin{equation*}
\begin{aligned}
 \infl{acc}(z')
:=  \sum_{n \in |Q_v|} \infl{acc}(z', z_n^\val)   
\approx    - \eta  \sum\nolimits_{n \in |Q_v|} \left\langle \partial_{\mathbf{w}_{t}} \ell(\mathbf{w}_t, z'),  \partial_{\mathbf{w}_{t}} \ell(\mathbf{w}_t, z_n^\val)\right\rangle
\end{aligned}
\end{equation*}

\begin{proof}
    Taking the first-order Taylor expansion, we will have
\begin{align*}
         \ell(\mathbf{w}_{t+1}, z_n^\val)
\approx  \ell(\mathbf{w}_{t}, z_n^\val)
         + \left\langle \left. \frac{\partial  \ell(\mathbf{w}, z_n^\val)}{\partial f(\mathbf{w}, x_n^\val)} \right|_{\mathbf{w}=\mathbf{w}_{t}}, f(\mathbf{w}_{t+1}, x_n^\val) - f(\mathbf{w}_{t},x_n^\val) \right \rangle.
\end{align*}
where we take this expansion with respect to $f(\mathbf{w},x_n^\val)$.
Similarly, we have
\begin{align*}
         f(\mathbf{w}_{t+1}, x_n^\val) - f(\mathbf{w}_{t}, x_n^\val) 
\approx &  \left.\left\langle \frac{\partial f(\mathbf{w},x_n^\val)}{\partial \mathbf{w}}, \mathbf{w}_{t+1} -\mathbf{w}_t \right\rangle \right|_{\mathbf{w} = \mathbf{w}_{t}} \\
= &-\eta \left.\left\langle \frac{\partial f(\mathbf{w},x_n^\val)}{\partial \mathbf{w}}, \frac{\partial \ell(\mathbf{w},z')}{\partial \mathbf{w}} \right\rangle \right|_{\mathbf{w} = \mathbf{w}_{t}}.
\end{align*}
where the last equality holds due to Eq. (\ref{eq:change_model_parameters}).
Therefore, 
\begin{align*}
\ell(\mathbf{w}_{t+1}, z_n^\val) -  \ell(\mathbf{w}_{t}, z_n^\val)
\approx &  - \eta\left\langle \left. \frac{\partial  \ell(\mathbf{w}, z_n^\val)}{\partial f(\mathbf{w}, x_n^\val)} , \left\langle \frac{\partial f(\mathbf{w},x_n^\val)}{\partial \mathbf{w}}, \frac{\partial \ell(\mathbf{w},z')}{\partial \mathbf{w}} \right\rangle \right \rangle \right|_{\mathbf{w}=\mathbf{w}_{t}} \\
= & -\eta \left.\left\langle \frac{\partial f(\mathbf{w},x_n^\val)}{\partial \mathbf{w}}, \frac{\partial \ell(\mathbf{w},z')}{\partial \mathbf{w}} \right\rangle \right|_{\mathbf{w} = \mathbf{w}_{t}} .
\end{align*}
Then the accuracy influence on the validation dataset $V$ can be denoted by
\begin{align*}
 \infl{acc}(z')
:=  \sum_{n \in |Q_v|} \infl{acc}(z', z_n^\val)   
\approx    - \eta \left\langle \partial_{\mathbf{w}_{t}} \ell(\mathbf{w}_t, z'), \sum_{n \in |Q_v|} \partial_{\mathbf{w}_{t}} \ell(\mathbf{w}_t, z_n^\val)\right\rangle
\end{align*}
\end{proof}
\end{varthm}

\subsection{Proof of Lemma \ref{lemma:influence_of_fairness}}

\begin{varthm}[Lemma~\ref{lemma:influence_of_fairness}]
The influence of fairness on the validation dataset $Q_v$ can be denoted by
\begin{equation*}
    \begin{aligned}
         \infl{fair}(z') 
         : =   \sum_{n \in |Q_v|} \infl{fair}(z', z_n^\val) 
         \approx    - \eta \sum_{n\in|Q_v|} \left\langle \partial_{\mathbf{w}_t} \ell(\mathbf{w}_t, z'), \partial_{\mathbf{w}_t}  \phi(\mathbf{w}_t, z_n^\val) \right\rangle 
\end{aligned}
\end{equation*}
\begin{proof}

    By first-order approximation, we have
\begin{align*}
 \phi(\mathbf{w}_{t+1}, z_n^\val) 
\approx \phi(\mathbf{w}_{t}, z_n^\val) + \left\langle \left. \frac{\partial  \phi(\mathbf{w}_{t},z_n^\val) }{\partial f(\mathbf{w},x_n^\val)} \right|_{\mathbf{w}=\mathbf{w}_{t}}, f(\mathbf{w}_{t+1}, x_n^\val)  - f(\mathbf{w}_{t}, x_n^\val) \right \rangle.
\end{align*}

Recall by first-order approximation, we have
\begin{align*}
         f(\mathbf{w}_{t+1}, x_n^\val)  - f(\mathbf{w}_{t}, x_n^\val) 
\approx  -\eta \left.\left\langle \frac{\partial f(\mathbf{w},x_n^\val)}{\partial \mathbf{w}}, \frac{\partial \ell(\mathbf{w}, z')}{\partial \mathbf{w}} \right\rangle \right|_{\mathbf{w} = \mathbf{w}_{t}}.
\end{align*}

Therefore, 
\begin{align*}
 \phi(\mathbf{w}_{t+1}, z_n^\val) 
- \phi(\mathbf{w}_{t}, z_n^\val) \approx - \eta \left. \left\langle \frac{\partial \ell(\mathbf{w}, z')}{\partial \mathbf{w}}, \frac{\partial \phi(\mathbf{w}_{t}, z_n^\val)}{\partial \mathbf{w}} \right\rangle \right|_{\mathbf{w} = \mathbf{w}_{t}}
\end{align*}

Note the loss function in the above equation should be $\ell$ since the model is updated with $\ell$-loss.
Therefore, 
\begin{align*}
         \infl{fair}(z') = \sum_{n \in |Q_v|} \infl{fair}(z', z_n^\val)  \approx   - \left. \eta \sum_{n\in|Q_v|} \left\langle \frac{\partial \ell(\mathbf{w}, z')}{\partial \mathbf{w}}, \frac{\partial \phi(\mathbf{w}_{t}, z_n^\val)}{\partial \mathbf{w}} \right\rangle \right|_{\mathbf{w} = \mathbf{w}_{t}}.
\end{align*}
\end{proof}
\end{varthm}

\subsection{Proof of Theorem~\ref{lemma:accuracy_bound}}

\begin{varthm}[Theorem~\ref{lemma:accuracy_bound}]
     (Generalization error bound). Let $\text{dist}(\mathcal{P}, \mathcal{Q})$, $G_P$ be defined therein. With probability at least $1-\delta$ with $\delta \in (0,1)$, the generalization error bound of the model trained on dataset $P$ is
         \begin{align}
        \mathcal{R}_{\mathcal{Q}}(\mathbf{w}^P) \leq   \underbrace{G_P \cdot \text{dist}(\mathcal{P}, \mathcal{Q})}_{\text{ distribution shift}} + \sqrt{\frac{\log(4/\delta)}{2|P|}} +  \mathcal{R}_{P}(\mathbf{w}^{P}).
    \end{align}

Note that the generalization error bound is predominantly influenced by the shift in distribution when we think of an overfitting model, i.e., the empirical risk $\mathcal{R}_{P}(\mathbf{w}^{P}) \to 0$. The detailed proof is presented as follows.

\begin{proof}
The generalization error bound is
\begin{equation*}
    \begin{split}
        \mathcal{R}_{\mathcal{Q}}(\mathbf{w}^P) & = \underbrace{\bigg ( \mathcal{R}_{\mathcal{Q}}(\mathbf{w}^P) - \mathcal{R}_{\mathcal{P}}(\mathbf{w}^P) \bigg ) }_{\text{distribution shift}} + \underbrace{\bigg (\mathcal{R}_{\mathcal{P}}(\mathbf{w}^P) - \mathcal{R}_{P}(\mathbf{w}^P)\bigg)}_{\text{Hoeffding's inequality}} + \underbrace{\mathcal{R}_{P}(\mathbf{w}^P)) }_{\text{empirical risk}} \\
        & \leq G_P\cdot \text{dist}(\mathcal{P}, \mathcal{Q}) + \sqrt{\frac{\log(4/\delta)}{2|P|}} + \mathcal{R}_{P}(\mathbf{w}^P)  \\
    \end{split}
\end{equation*}
For the first term (distribution shift), we have
\begin{equation*}
    \begin{split}
         \mathcal{R}_{\mathcal{Q}}(\mathbf{w}^P) - \mathcal{R}_{\mathcal{P}}(\mathbf{w}^P) 
        &= \mathbb{E}_{z\sim \mathcal{Q}} [\ell( \mathbf{w}^P, z)] - \mathbb{E}_{z\sim \mathcal{P}}[\ell( \mathbf{w}^P, z)] \\
        & = \sum_{i=1}^{I}p^{(\mathcal{Q})}(\pi=i)\mathbb{E}_{z\sim \pi_i} [\ell( \mathbf{w}^P, z)] - \sum_{i=1}^{I}p^{(\mathcal{P})}(\pi=i)\mathbb{E}_{z\sim \pi_i}  [\ell( \mathbf{w}^P, z)] \\
        & \leq  \sum_{i=1}^{I} | p^{(\mathcal{P})}(\pi=i) - p^{(\mathcal{Q})}(\pi=i) |  \mathbb{E}_{z\sim \pi_i}  [\ell( \mathbf{w}^P, z)] \\
        & \leq G_P\cdot \text{dist}(\mathcal{P}, \mathcal{Q}).
    \end{split}
\end{equation*}

where we define $\text{dist}(\mathcal{P}, \mathcal{Q}) =  \sum_{i=1}^{I} | p^{(\mathcal{P})}(\pi=i) - p^{(\mathcal{Q})}(\pi=i) | $ and $\mathbb{E}_{z\sim \pi_i}  [\ell( \mathbf{w}^P, z)] \leq G_P, \forall i \in I$ because of Assumption \ref{assumption:bounded_gradient}. To avoid misunderstanding, we use a subscript $P$ of the constant $G$ to clarify the corresponding model $\mathbf{w}^P$.
Then, for the second term (Hoeffding inequality), with probability at least $1-\delta$, we have $|\mathcal{R}_{\mathcal{P}}(\mathbf{w}^P) - \mathcal{R}_{P}(\mathbf{w}^{P})| \leq \sqrt{\frac{\log(4/\delta)}{2|P|}}$. 
\end{proof}
\end{varthm}

\subsection{Proof of Theorem \ref{thm:performance_gap_test}}

\begin{varthm}[Theorem \ref{thm:performance_gap_test}]
(Upper bound of fairness disparity). Suppose  $\mathcal{R}_{\mathcal{Q}}(\cdot)$ follows Assumption \ref{assumption:lipschitz}. Let $\text{dist}(\mathcal{P}, \mathcal{Q})$, $G_P$, $\text{dist}(\mathcal{P}_k, \mathcal{Q}_k)$ and $\text{dist}(P_k, P)$ be defined therein. The initial learning rate $\eta_0^2 < \frac{1}{\sqrt{2}TL}$, where $T$ denotes the number of training epochs. Given model $\mathbf{w}^P$ and $\mathbf{w}^k$ trained exclusively on group $k$'s data $P_k$,  with probability at least $1-\delta$ with $\delta \in (0,1)$, then the upper bound of the fairness disparity is
\begin{equation*}
\begin{split}
         \mathcal{R}_{\mathcal{Q}_k}(\mathbf{w}^{P})- \mathcal{R}_{\mathcal{Q}}(\mathbf{w}^P) \leq 
            \underbrace{G_P \cdot \text{dist}(\mathcal{P}, \mathcal{Q})}_{\text{ distribution shift}} + \underbrace{4 L^2  G^2 \cdot  \text{dist}(P_k, P)^2}_{\text{group gap}}  + G_k \cdot \text{dist}(\mathcal{P}_k, \mathcal{Q}_k) + \Upsilon.
\end{split}
\end{equation*}
where $\Upsilon =\sqrt{\frac{\log(4/\delta)}{2|P|}} +  \sqrt{\frac{\log(4/\delta)}{2|P_k|}} + \varpi  + \varpi_k$.
Note that $\mathbb{E}_{z\sim \pi_i} [\ell(\mathbf{w}^k, z)] \leq G_k$,  $\varpi = \mathcal{R}_{P}(\mathbf{w}^P) - \mathcal{R}_{\mathcal{Q}}^*(\mathbf{w}^{\mathcal{Q}})$ and $\varpi_k = \mathcal{R}_{P_k}(\mathbf{w}^{k})  - \mathcal{R}_{\mathcal{Q}_k}^*(\mathbf{w}^{\mathcal{Q}_k}) $. Specifically, $\varpi$ and $\varpi_k$ can be regarded as constants because  $\mathcal{R}_{P}(\mathbf{w}^P)$ and $\mathcal{R}_{P_k}(\mathbf{w}^{k}) $ correspond to the empirical risks, $\mathcal{R}_{\mathcal{Q}}^*(\mathbf{w^{\mathcal{Q}}})$ and $\mathcal{R}_{\mathcal{Q}_k}^*(\mathbf{w^{\mathcal{Q}_k}})$ represent the ideal minimal empirical risk of model $\mathbf{w}^{\mathcal{Q}}$ trained on distribution $\mathcal{Q}$ and $\mathcal{Q}_k$, respectively. Moreover, these quantities $\varpi$ and $\varpi_k$ are not taken into account during the training phase, but rather in relation to the final model.

\begin{proof}
First  of all, we have
\begin{align*}
    &\mathcal{R}_{\mathcal{Q}_k}(\mathbf{w}^{P})- \mathcal{R}_{\mathcal{Q}}(\mathbf{w}^P) \\
     =&    (\mathcal{R}_{\mathcal{Q}}(\mathbf{w}^{P_k})-  \mathcal{R}_{\mathcal{Q}}(\mathbf{w}^P)  ) +  (\mathcal{R}_{\mathcal{Q}_k}(\mathbf{w}^{P})-  \mathcal{R}_{\mathcal{Q}}(\mathbf{w}^{P_k})  ) \\
    = &    (\mathcal{R}_{\mathcal{Q}}(\mathbf{w}^{P_k})-  \mathcal{R}_{\mathcal{Q}}(\mathbf{w}^P)  ) +  (\mathcal{R}_{\mathcal{Q}_k}(\mathbf{w}^{P})-  \mathcal{R}_{\mathcal{Q}_k}(\mathbf{w}^{P_k})  ) +  (\mathcal{R}_{\mathcal{Q}_k}(\mathbf{w}^{P_k})-  \mathcal{R}_{\mathcal{Q}}(\mathbf{w}^{P_k})  ) \\
   \leq  & (\mathcal{R}_{\mathcal{Q}}(\mathbf{w}^{P_k})-  \mathcal{R}_{\mathcal{Q}}(\mathbf{w}^P)  ) +  (\mathcal{R}_{\mathcal{Q}_k}(\mathbf{w}^{P})-  \mathcal{R}_{\mathcal{Q}_k}(\mathbf{w}^{P_k})  )
\end{align*}
where $\mathbf{w}^{P_k}$  represents the model trained exclusively on group $k$'s data. For simplicity, when there is no confusion, we use $\mathbf{w}^{k}$ to substitute $\mathbf{w}^{P_k}$. The inequality \( \mathcal{R}_{\mathcal{Q}_k}(\mathbf{w}^{k}) - \mathcal{R}_{\mathcal{Q}}(\mathbf{w}^{k}) \geq 0 \) holds because the model tailored for a single group $k$ can not generalize well to the entirety of the test set $Q$.

Then, for the first term, we have
\begin{equation*}
\begin{split}
    \mathcal{R}_{\mathcal{Q}}(\mathbf{w}^k)-  \mathcal{R}_{\mathcal{Q}}(\mathbf{w}^P) & \stackrel{(a)}{\leq} \langle \nabla  \mathcal{R}_{\mathcal{Q}}(\mathbf{w}^P), \mathbf{w}^{k} - \mathbf{w}^P \rangle  + \frac{L}{2} \lVert \mathbf{w}^{k} - \mathbf{w}^P \rVert^2  \\
    & \stackrel{(b)}{\leq} L \lVert \mathbf{w}^{k} - \mathbf{w}^P \rVert^2  + \frac{1}{2L} \lVert  \nabla \mathcal{R}_{\mathcal{Q}}(\mathbf{w}^P) \rVert^2 \\
    & \stackrel{(c)}{\leq} \underbrace{L \lVert \mathbf{w}^{k} - \mathbf{w}^P \rVert^2 }_{\text{group gap}}  +  \underbrace{(\mathcal{R}_{\mathcal{Q}}(\mathbf{w}^P) - \mathcal{R}_{\mathcal{Q}}^*(\mathbf{w^{\mathcal{Q}}}))}_{\text{train-test model gap}}
\end{split}
\end{equation*}
where inequality (a) holds because of the L-smoothness of expected loss $\mathcal{R}_{\mathcal{Q}}(\cdot)$, i.e., Assumption \ref{assumption:lipschitz}. Specifically, inequality (b) holds because, by Cauchy-Schwarz inequality and AM-GM inequality, we have
\begin{equation*}
    \langle \nabla  \mathcal{R}_{\mathcal{Q}}(\mathbf{w}^P), \mathbf{w}^{k} - \mathbf{w}^P \rangle \leq  \frac{L}{2} \lVert \mathbf{w}^{k} - \mathbf{w}^P \rVert^2 + \frac{1}{2L} \lVert  \nabla \mathcal{R}_{\mathcal{Q}}(\mathbf{w}^P) \rVert^2.
\end{equation*}

Then, inequality (c) holds due to the L-smoothness of $\mathcal{R}_{\mathcal{Q}}(\cdot)$ (Assumption \ref{assumption:lipschitz}), we can get a variant of Polak-Łojasiewicz inequality, which follows 
\begin{equation*}
    \lVert   \nabla \mathcal{R}_{\mathcal{Q}}(\mathbf{w}^P) \rVert^2 \leq 2L(\mathcal{R}_{\mathcal{Q}}(\mathbf{w}^P) - \mathcal{R}_{\mathcal{Q}}^*(\mathbf{w^{\mathcal{Q}}})).
\end{equation*}
where $(\mathcal{R}_{\mathcal{Q}}^*(\mathbf{w^{\mathcal{Q}}}))$ denotes the ideal minimal empirical risk of model $\mathbf{w}^{\mathcal{Q}}$ trained on distribution $\mathcal{Q}$.
Following a similar idea, for the second term, we also have 
\begin{equation*}
\begin{split}
    \mathcal{R}_{\mathcal{Q}_k}(\mathbf{w}^P)-  \mathcal{R}_{\mathcal{Q}_k}(\mathbf{w}^{k}) 
    & \leq L \lVert \mathbf{w}^{P} - \mathbf{w}^k \rVert^2   +  (\mathcal{R}_{\mathcal{Q}_k}(\mathbf{w}^k) - \mathcal{R}_{\mathcal{Q}_k}^*(\mathbf{w}^{\mathcal{Q}_k}))
\end{split}
\end{equation*}

Combined with two terms, we have
\begin{align*}
    \mathcal{R}_{\mathcal{Q}_k}(\mathbf{w}^{P})- \mathcal{R}_{\mathcal{Q}}(\mathbf{w}^P) & \leq  \underbrace{(\mathcal{R}_{\mathcal{Q}}(\mathbf{w}^P) - \mathcal{R}_{\mathcal{Q}}^*(\mathbf{w^{\mathcal{Q}}}))}_{\text{train-test model gap}} + \underbrace{2L \lVert \mathbf{w}^{k} - \mathbf{w}^P \rVert^2 }_{\text{group model gap}}   + (\mathcal{R}_{\mathcal{Q}_k}(\mathbf{w}^k) - \mathcal{R}_{\mathcal{Q}_k}^*(\mathbf{w}^{\mathcal{Q}_k}))
\end{align*}

Lastly, integrating with Lemmas \ref{lemma:train-test_model_gap}, \ref{lemma:diff_w_k-w} and \ref{lemma:train-test_model_gap_k}, we can finish the proof.
\end{proof}
\end{varthm}

\begin{lemma}
\label{lemma:train-test_model_gap}
(Train-test model gap) With probability at least $1-\delta$, given the  model $\mathbf{w}^P$ trained on train set $P$, we have
\begin{align*}
    \mathcal{R}_{\mathcal{Q}}(\mathbf{w}^P) - \mathcal{R}_{\mathcal{Q}}^*(\mathbf{w^{\mathcal{Q}}}) \leq G_P \cdot \text{dist}(\mathcal{P}, \mathcal{Q}) + \sqrt{\frac{\log(4/\delta)}{2|P|}} + \varpi.
\end{align*}
where $\text{dist}(\mathcal{P}, \mathcal{Q}) =  \sum_{i=1}^{I} | p^{(\mathcal{P})}(\pi=i) - p^{(\mathcal{Q})}(\pi=i) | $ and $\mathbb{E}_{z\sim \pi_i} [\ell( \mathbf{w}^P, z)] \leq G_P, \forall i \in I$, and a constant $\varpi := \mathcal{R}_{P}(\mathbf{w}^P) - \mathcal{R}_{\mathcal{Q}}^*(\mathbf{w^{\mathcal{Q}}})$.
\end{lemma}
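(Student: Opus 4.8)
The target statement, Lemma \ref{lemma:train-test_model_gap}, is essentially a re-packaging of the generalization bound of Lemma \ref{lemma:accuracy_bound} with a constant subtracted and added. The plan is to start from the trivial identity
\begin{equation*}
\mathcal{R}_{\mathcal{Q}}(\mathbf{w}^P) - \mathcal{R}_{\mathcal{Q}}^*(\mathbf{w}^{\mathcal{Q}}) = \mathcal{R}_{\mathcal{Q}}(\mathbf{w}^P) - \mathcal{R}_{P}(\mathbf{w}^P) + \big(\mathcal{R}_{P}(\mathbf{w}^P) - \mathcal{R}_{\mathcal{Q}}^*(\mathbf{w}^{\mathcal{Q}})\big),
\end{equation*}
where the last bracket is precisely the constant $\varpi$ by definition. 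So everything reduces to bounding $\mathcal{R}_{\mathcal{Q}}(\mathbf{w}^P) - \mathcal{R}_{P}(\mathbf{w}^P)$.

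For that quantity I would repeat verbatim the decomposition used in the proof of Lemma \ref{lemma:accuracy_bound}: write it as the sum of a distribution-shift term $\mathcal{R}_{\mathcal{Q}}(\mathbf{w}^P) - \mathcal{R}_{\mathcal{P}}(\mathbf{w}^P)$ and a sampling term $\mathcal{R}_{\mathcal{P}}(\mathbf{w}^P) - \mathcal{R}_{P}(\mathbf{w}^P)$. The first is bounded by $G_P \cdot \text{dist}(\mathcal{P}, \mathcal{Q})$ by expanding both expected risks over the component distributions $\{\pi_i\}$, applying the triangle inequality term-by-term, and using $\mathbb{E}_{z\sim\pi_i}[\ell(\mathbf{w}^P,z)] \le G_P$ from Assumption \ref{assumption:bounded_gradient}. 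The second term is $\le \sqrt{\log(4/\delta)/(2N_P)}$ with probability at least $1-\delta$ by Hoeffding's inequality applied to the $N_P$ i.i.d.\ loss evaluations defining the empirical risk $\mathcal{R}_{P}$. Adding the two bounds and then the constant $\varpi$ gives exactly the claimed inequality.

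Concretely the steps, in order, are: (i) insert and subtract $\mathcal{R}_{P}(\mathbf{w}^P)$ and identify $\varpi$; (ii) insert and subtract $\mathcal{R}_{\mathcal{P}}(\mathbf{w}^P)$ to split the residual into distribution-shift plus sampling error; (iii) expand over component distributions and bound the distribution-shift term by $G_P \cdot \text{dist}(\mathcal{P},\mathcal{Q})$; (iv) invoke Hoeffding for the sampling term; (v) collect terms. There is essentially no obstacle here — this lemma is a bookkeeping corollary of Lemma \ref{lemma:accuracy_bound}, and the only thing to be slightly careful about is keeping the failure probability at $\delta$ (a single application of Hoeffding suffices, so no union bound is needed) and making sure $\varpi$ is treated as the defined constant rather than something to be bounded further. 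If anything is mildly delicate, it is justifying that $\mathcal{R}_P(\mathbf{w}^P)$ is a genuine average of bounded-or-subgaussian i.i.d.\ terms so that Hoeffding applies with the stated constant; this is exactly the same hypothesis already used (implicitly) in Lemma \ref{lemma:accuracy_bound}, so I would simply reuse it.
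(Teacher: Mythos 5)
Your proposal is correct and matches the paper's own argument: the paper likewise telescopes $\mathcal{R}_{\mathcal{Q}}(\mathbf{w}^P)-\mathcal{R}_{\mathcal{Q}}^*(\mathbf{w}^{\mathcal{Q}})$ into the distribution-shift term bounded by $G_P\cdot\text{dist}(\mathcal{P},\mathcal{Q})$ via the component-distribution expansion, a Hoeffding term $\sqrt{\log(4/\delta)/(2N_P)}$, and the constant $\varpi=\mathcal{R}_{P}(\mathbf{w}^P)-\mathcal{R}_{\mathcal{Q}}^*(\mathbf{w}^{\mathcal{Q}})$. The only difference is the order in which you insert the intermediate risks, which is immaterial.
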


\begin{proof}
First of all, we have,
\begin{equation*}
    \begin{split}
        \mathcal{R}_{\mathcal{Q}}(\mathbf{w}^P) - \mathcal{R}_{\mathcal{Q}}^*(\mathbf{w^{\mathcal{Q}}}) & = \bigg ( \mathcal{R}_{\mathcal{Q}}(\mathbf{w}^P) - \mathcal{R}_{\mathcal{P}}(\mathbf{w}^P) \bigg ) + \mathcal{R}_{\mathcal{P}}(\mathbf{w}^P) - \mathcal{R}_{\mathcal{Q}}^*(\mathbf{w^{\mathcal{Q}}}) \\
        & \leq G\cdot \text{dist}(\mathcal{P}, \mathcal{Q})  + \mathcal{R}_{\mathcal{P}}(\mathbf{w}^P) - \mathcal{R}_{\mathcal{Q}}^*(\mathbf{w^{\mathcal{Q}}}) \\
        & \leq \underbrace{G\cdot \text{dist}(\mathcal{P}, \mathcal{Q}) }_{\text{distribution shift}} + \underbrace{\bigg (\mathcal{R}_{\mathcal{P}}(\mathbf{w}^P) - \mathcal{R}_{P}(\mathbf{w}^P)\bigg)}_{\text{Hoeffding's inequality}} + \underbrace{\bigg(\mathcal{R}_{P}(\mathbf{w}^P) - \mathcal{R}_{\mathcal{Q}}^*(\mathbf{w^{\mathcal{Q}}}) \bigg)}_{\text{overfitting \& ideal case}}\\
    \end{split}
\end{equation*}

For the first term (distribution shift), we have
\begin{equation*}
    \begin{split}
         \mathcal{R}_{\mathcal{Q}}(\mathbf{w}^P) - \mathcal{R}_{\mathcal{P}}(\mathbf{w}^P) 
        & = \mathbb{E}_{z\sim \mathcal{Q}} [\ell( \mathbf{w}^P, z)] - \mathbb{E}_{z\sim \mathcal{P}}[\ell( \mathbf{w}^P, z)] \\
        & = \sum_{i=1}^{I}p^{(\mathcal{Q})}(\pi=i)\mathbb{E}_{z\sim \pi_i} [\ell( \mathbf{w}^P, z)] - \sum_{i=1}^{I}p^{(\mathcal{P})}(\pi=i)\mathbb{E}_{z\sim \pi_i}  [\ell( \mathbf{w}^P, z)] \\
        & \leq  \sum_{i=1}^{I} | p^{(\mathcal{P})}(\pi=i) - p^{(\mathcal{Q})}(\pi=i) |  \mathbb{E}_{z\sim \pi_i}  [\ell( \mathbf{w}^P, z)] \\
        & \leq G_P\cdot \text{dist}(\mathcal{P}, \mathcal{Q}).
    \end{split}
\end{equation*}

where we define $\text{dist}(\mathcal{P}, \mathcal{Q}) =  \sum_{i=1}^{I} | p^{(\mathcal{P})}(\pi=i) - p^{(\mathcal{Q})}(\pi=i) | $ and $\mathbb{E}_{z\sim \pi_i}  [\ell( \mathbf{w}^P, z)] \leq G_P, \forall i \in I$ because of Assumption \ref{assumption:bounded_gradient}.
For the second term, with probability at least $1-\delta$, we have $|\mathcal{R}_{\mathcal{P}}(\mathbf{w}^P) - \mathcal{R}_{P}(\mathbf{w}^P)| \leq \sqrt{\frac{\log(4/\delta)}{2|P|}}$.
Note that the third term $\mathcal{R}_{P}(\mathbf{w}^P) - \mathcal{R}_{Q}^*(\mathbf{w^{\mathcal{Q}}})$ can be regarded as a constant $\varpi$.because $\mathcal{R}_{P}(\mathbf{w}^P)$ is the empirical risk and $\mathcal{R}_{\mathcal{Q}}^*(\mathbf{w^{\mathcal{Q}}})$ is the ideal minimal empirical risk of model $\mathbf{w}^{\mathcal{Q}}$ trained on distribution $\mathcal{Q}$.

Therefore, with probability at least $1-\delta$, given model $\mathbf{w}^P$, 
\begin{align*}
    \mathcal{R}_{\mathcal{Q}}(\mathbf{w}^P) - \mathcal{R}_{\mathcal{Q}}^*(\mathbf{w^{\mathcal{Q}}}) \leq G_P \cdot \text{dist}(\mathcal{P}, \mathcal{Q}) + \sqrt{\frac{\log(4/\delta)}{2|P|}} + \varpi.
\end{align*}
\end{proof}

\begin{lemma}
\label{lemma:diff_w_k-w}
    (Group model gap) Suppose Assumptions \ref{assumption:lipschitz} and \ref{assumption:bounded_gradient} hold for empirical risk $\mathcal{R}_{P}(\cdot)$. The initial learning rate $\eta_0^2 < \frac{1}{\sqrt{2}TL}$, where $T$ denotes the number of training epochs. Then, we have
\begin{equation*}
    \lVert \mathbf{w}^{k} - \mathbf{w}^{P} \rVert^2 \leq 2 L  G^2 \bigg( \sum_{i=1}^{I}\bigg|p^{(k)}(\pi=i) - p^{(P)}(\pi=i)\bigg|\bigg)^2.
\end{equation*}
\end{lemma}

\begin{proof}
According to the above definition, we similarly define the following empirical risk $\mathcal{R}_{P_k}(\mathbf{w})$ 
 over group $k$'s data $P_k$ by splitting samples according to their marginal distributions, shown as follows.

\begin{equation*}
    \mathcal{R}_{P_k}(\mathbf{w}) := \sum_{i=1}^{I}p^{(k)}(\pi=i)\mathbb{E}_{z\sim \pi_i} [\ell(\mathbf{w}, z)].
\end{equation*}

Let $\eta_t$ indicate the learning rate of epoch $t$. Then, for each epoch $t$, group $k$'s optimizer performs SGD as follows:
\begin{equation*}
   \mathbf{w}_t^{k} = \mathbf{w}_{t-1}^{k} - \eta_t \sum_{i=1}^{I}p^{(k)}(\pi=i) \nabla_{\mathbf{w}}\mathbb{E}_{z\sim \pi_i} [\ell(\mathbf{w}_{t-1}^{k}, z)].
\end{equation*}

For any epoch $t+1$, we have
\begin{equation*}
    \begin{split}
        & \lVert \mathbf{w}^{k}_{t+1} - \mathbf{w}^{P}_{t+1} \rVert^2 \\
        & = \lVert \mathbf{w}^{k}_{t} - \eta_t \sum_{i=1}^{I}p^{(k)}(\pi =i) \nabla_{\mathbf{w}} \mathbb{E}_{z\sim \pi_i} [\ell( \mathbf{w}_{t}^{k}, z)] - \mathbf{w}^{P}_{t} + \eta_t \sum_{i=1}^{I}p^{(P)}(\pi =i) \nabla_{\mathbf{w}} \mathbb{E}_{z\sim \pi_i} [\ell(\mathbf{w}_{t}^{P},z)]\rVert^2 \\ 
        & \leq \lVert \mathbf{w}^{k}_{t} - \mathbf{w}^{P}_{t} \rVert^2 + \eta_t^2 \lVert \sum_{i=1}^{I}p^{(k)}(\pi =i) \nabla_{\mathbf{w}} \mathbb{E}_{z\sim \pi_i} [\ell( \mathbf{w}_{t}^{k}, z)]- \sum_{i=1}^{I}p^{(P)}(\pi =i) \nabla_{\mathbf{w}} \mathbb{E}_{z\sim \pi_i} [\ell(\mathbf{w}_{t}^{P},z)] \rVert^2 \\
        & \leq \lVert \mathbf{w}^{k}_{t} - \mathbf{w}^{P}_{t} \rVert^2 + 2\eta_t^2 \lVert \sum_{i=1}^{I}p^{(P)}(\pi =i)L_{\pi_i} \bigg[\nabla_{\mathbf{w}} \mathbb{E}_{z\sim \pi_i} [\ell( \mathbf{w}_{t}^{k}, z)]- \nabla_{\mathbf{w}} \mathbb{E}_{z\sim \pi_i} [\ell(\mathbf{w}_{t}^{P},z)]\bigg ]\rVert^2 \\
        & \quad + 2\eta_t^2  \lVert \sum_{i=1}^{I}\bigg(p^{(k)}(\pi =i) - p^{(P)}(\pi =i)\bigg)\nabla_{\mathbf{w}} \mathbb{E}_{z\sim \pi_i} [\ell(\mathbf{w}_{t}^{P},z)]\rVert^2 \\
        & \leq \lVert \mathbf{w}^{k}_{t} - \mathbf{w}^{P}_{t} \rVert^2 + 2\eta_t^2 \bigg(\sum_{i=1}^{I}p^{(k)}(\pi =i)L_{\pi_i}\bigg)^2\lVert \mathbf{w}^{k}_{t} - \mathbf{w}^{P}_{t} \rVert^2  \\
        & \quad + 2 L \eta_t^2 g^2_{max}(\mathbf{w}^{Q}_{t}) \bigg( \sum_{i=1}^{I}|p^{(k)}(\pi =i) - p^{(P)}(\pi =i)|\bigg)^2 \\
        & \leq \bigg(1 + 2\eta_t^2 \bigg(\sum_{i=1}^{I}p^{(k)}(\pi =i)L_{\pi_i}\bigg)^2\bigg) \lVert \mathbf{w}^{k}_{t} - \mathbf{w}^{P}_{t} \rVert^2 \\
         & \quad + 2 L \eta_t^2 g^2_{max}(\mathbf{w}^{Q}_{t}) \bigg( \sum_{i=1}^{I}|p^{(k)}(\pi =i) - p^{(P)}(\pi =i)|\bigg)^2 \\
        & \leq (1 + 2\eta_t^2 L^2) \lVert \mathbf{w}^{k}_{t} - \mathbf{w}^{P}_{t} \rVert^2  +  2 L \eta_t^2G^2 \bigg( \sum_{i=1}^{I}|p^{(k)}(\pi =i) - p^{(P)}(\pi =i)|\bigg)^2.
    \end{split}
\end{equation*}
where the third inequality holds since we assume that $\nabla_{\mathbf{w}} \mathbb{E}_{z\sim \pi_i} [\ell(\mathbf{w}, z)]$  is $L_{\pi_i}$-Lipschitz continuous, {\em i.e.}, $|| \nabla_{\mathbf{w}} \mathbb{E}_{z\sim \pi_i} [\ell( \mathbf{w}_{t}^{k}, z)]-\nabla_{\mathbf{w}} \mathbb{E}_{z\sim \pi_i} [\ell(\mathbf{w}_{t}^{P},z)]|| \leq L_{\pi_i} ||\mathbf{w}_{t}^{k}-\mathbf{w}_{t}^{P}||$, and denote $g_{max}(\mathbf{w}_{t}^{P}) = \max_{i=1}^{I} ||\nabla_{\mathbf{w}} \mathbb{E}_{z\sim \pi_i} [\ell(\mathbf{w}_{t}^{P},z)]|| $. The last inequality holds because the above-mentioned assumption that $L = L_{\pi_i} = L_{\pi}, \forall i \in I$, {\em i.e.}, Lipschitz-continuity will not be affected by the samples' classes.
Then, $g_{max}(\mathbf{w}_{t}^{P}) \leq G$ because of Assumption \ref{assumption:bounded_gradient}. 

For $T$ training epochs, we have
\begin{equation*}
    \begin{split}
        & \lVert \mathbf{w}^{k}_{T} - \mathbf{w}^{P}_{T} \rVert^2 \\ 
        & \leq (1 + 2\eta_T^2 L^2) \lVert \mathbf{w}^{k}_{T-1} - \mathbf{w}^{P}_{T-1} \rVert^2  +  2 L \eta_T^2 G^2 \bigg( \sum_{i=1}^{I}|p^{(k)}(\pi =i) - p^{(P)}(\pi =i)|\bigg)^2\\
        & \leq \prod_{t=0}^{T} (1+2\eta_{t}^2 L^2)^{t}  \lVert \mathbf{w}^{k}_{0} - \mathbf{w}^{P}_{0} \rVert^2  +  2 L  G^2 \sum_{t=0}^{T}(\eta_{t}^{2}(1+2\eta_{t}^2 L^2))^{T-t}  \bigg( \sum_{i=1}^{I}|p^{(k)}(\pi =i) - p^{(P)}(\pi =i)|\bigg)^2 \\
        & \leq 2 L  G^2 \sum_{t=0}^{T}(\eta_{t}^{2}(1+2\eta_{t}^2 L^2))^{T-t}  \bigg( \sum_{i=1}^{I}|p^{(k)}(\pi =i) - p^{(P)}(\pi =i)|\bigg)^2.
    \end{split}
\end{equation*}
where the last inequality holds because the initial models are the same, i.e., $ \mathbf{w}_{0} = \mathbf{w}^{k}_{0} = \mathbf{w}^{P}_{0}, \forall k$.  When the condition $\eta_t^2 < \eta_0^2 < \frac{1}{\sqrt{2}TL}$ satisfies,  $2 L  G^2 \sum_{t=0}^{T}(\eta_{t}^{2}(1+2\eta_{t}^2 L^2))^{T-t}$ can be simplifited as $2 L G^2$, which is independent of the learning algorithm. This condition is easy to be satisfied since the learning rate $\eta_t$ is a small value ($<0.0001$) and usually set to be decay with the training epoch (i.e., $\eta_{t+1} \leq \eta_t$). 
\end{proof}

\begin{lemma}
\label{lemma:train-test_model_gap_k}
 With probability at least $1-\delta$, given the  model $\mathbf{w}^{k}$ trained on group $k$'s dataset $P_k$, we have
\begin{align*}
    \mathcal{R}_{\mathcal{Q}_k}(\mathbf{w}^k) - \mathcal{R}_{\mathcal{Q}_k}^*(\mathbf{w}^{\mathcal{Q}_k}) \leq G_k \cdot \text{dist}(\mathcal{P}_k, \mathcal{Q}_k) + \sqrt{\frac{\log(4/\delta)}{2|P_k|}} + \varpi_k.
\end{align*}
where $\text{dist}(\mathcal{P}_k, \mathcal{Q}_k) =  \sum_{i=1}^{I} | p^{(\mathcal{P}_k)}(\pi=i) - p^{(\mathcal{Q}_k)}(\pi=i) | $ and $\mathbb{E}_{z\sim \pi_i} [\ell(\mathbf{w}^k, z)] \leq G_k, \forall i \in I$, and $\varpi_k := \mathcal{R}_{P_k}(\mathbf{w}^{k})  - \mathcal{R}_{\mathcal{Q}_k}^*(\mathbf{w}^{\mathcal{Q}_k}) $.
\end{lemma}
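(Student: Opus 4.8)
The plan is to mirror the proof of Lemma~\ref{lemma:train-test_model_gap} essentially verbatim, substituting every global quantity by its group-$k$ restriction. First I would decompose the target gap by inserting and subtracting the population risk of $\mathbf{w}^k$ on the group-$k$ source distribution $\mathcal{P}_k$ and then its empirical risk on $P_k$:
\begin{align*}
\mathcal{R}_{\mathcal{Q}_k}(\mathbf{w}^k) - \mathcal{R}_{\mathcal{Q}_k}^*(\mathbf{w}^{\mathcal{Q}_k})
&= \big(\mathcal{R}_{\mathcal{Q}_k}(\mathbf{w}^k) - \mathcal{R}_{\mathcal{P}_k}(\mathbf{w}^k)\big) + \big(\mathcal{R}_{\mathcal{P}_k}(\mathbf{w}^k) - \mathcal{R}_{P_k}(\mathbf{w}^k)\big) \\
&\quad + \big(\mathcal{R}_{P_k}(\mathbf{w}^k) - \mathcal{R}_{\mathcal{Q}_k}^*(\mathbf{w}^{\mathcal{Q}_k})\big).
\end{align*}
The three summands are, respectively, a distribution-shift term, a concentration term, and the constant $\varpi_k$.

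For the distribution-shift term I would expand both $\mathcal{R}_{\mathcal{Q}_k}(\mathbf{w}^k)$ and $\mathcal{R}_{\mathcal{P}_k}(\mathbf{w}^k)$ over the common component family $\{\pi_1,\dots,\pi_I\}$ with mixing weights $p^{(\mathcal{Q}_k)}(\pi=i)$ and $p^{(\mathcal{P}_k)}(\pi=i)$, take the difference, pull out absolute values, and bound $\mathbb{E}_{z\sim\pi_i}[\ell(\mathbf{w}^k,z)]\le G_k$ term-by-term, obtaining $G_k\cdot\text{dist}(\mathcal{P}_k,\mathcal{Q}_k)$ exactly as in Lemma~\ref{lemma:train-test_model_gap} with the subscript $P$ replaced by $k$. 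For the concentration term I would apply Hoeffding's inequality to the draw of $P_k$ from $\mathcal{P}_k$ (using boundedness of $\mathbf{w}^k$'s per-sample loss), giving $|\mathcal{R}_{\mathcal{P}_k}(\mathbf{w}^k) - \mathcal{R}_{P_k}(\mathbf{w}^k)| \le \sqrt{\log(4/\delta)/(2N_{P_k})}$ with probability at least $1-\delta$. The last summand equals the constant $\varpi_k$ by definition, since $\mathcal{R}_{P_k}(\mathbf{w}^k)$ is an empirical risk and $\mathcal{R}_{\mathcal{Q}_k}^*(\mathbf{w}^{\mathcal{Q}_k})$ is the ideal minimal risk on $\mathcal{Q}_k$. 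Summing the three bounds yields the claim.

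There is no genuinely hard step here: the argument is structurally identical to Lemma~\ref{lemma:train-test_model_gap}. The only points requiring mild care are (i) to record explicitly that the group-conditional distributions $\mathcal{P}_k$ and $\mathcal{Q}_k$ are supported on the same component family $\{\pi_i\}$ and differ only in their mixing weights, so that $\text{dist}(\mathcal{P}_k,\mathcal{Q}_k) = \sum_{i=1}^{I}|p^{(\mathcal{P}_k)}(\pi=i) - p^{(\mathcal{Q}_k)}(\pi=i)|$ is meaningful; (ii) to invoke the group-$k$ analogue of the per-component loss bound, namely $\mathbb{E}_{z\sim\pi_i}[\ell(\mathbf{w}^k,z)]\le G_k$ for all $i$, which is the counterpart of the bound used for $G_P$ in Lemma~\ref{lemma:accuracy_bound}; and (iii) to note that the $1-\delta$ probability in the statement refers solely to the Hoeffding event for the sample $P_k$.
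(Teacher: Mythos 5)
Your proposal matches the paper's proof essentially step for step: the same three-way decomposition through $\mathcal{R}_{\mathcal{P}_k}(\mathbf{w}^k)$ and $\mathcal{R}_{P_k}(\mathbf{w}^k)$, the same component-wise expansion bounding the shift term by $G_k\cdot\text{dist}(\mathcal{P}_k,\mathcal{Q}_k)$, Hoeffding for the concentration term, and $\varpi_k$ absorbing the remainder. It is correct and no further changes are needed.
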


\begin{proof}
Building upon the proof idea presented in Lemma~\ref{lemma:train-test_model_gap}, for completeness, we provide a full proof here.
Firstly, we have,
\begin{equation*}
    \begin{split}
         & \mathcal{R}_{\mathcal{Q}_k}(\mathbf{w}^k) - \mathcal{R}_{\mathcal{Q}_k}^*(\mathbf{w}^{\mathcal{Q}_k})  \\
         & = \underbrace{( \mathcal{R}_{\mathcal{Q}_k}(\mathbf{w}^k) - \mathcal{R}_{\mathcal{P}_k}(\mathbf{w}^{k}) )}_{\text{distribution shift}}  + \underbrace{  (\mathcal{R}_{\mathcal{P}_k}(\mathbf{w}^{k}) - \mathcal{R}_{P_k}(\mathbf{w}^{k}) )}_{\text{Hoeffding's inequality}} + \underbrace{ (\mathcal{R}_{P_k}(\mathbf{w}^{k})  - \mathcal{R}_{\mathcal{Q}_k}^*(\mathbf{w}^{\mathcal{Q}_k})  )}_{\text{overfitting \& ideal case}}
    \end{split}
\end{equation*}

For the first term, we have
\begin{equation*}
    \begin{split}
        & \mathcal{R}_{\mathcal{Q}_k}(\mathbf{w}^k) - \mathcal{R}_{\mathcal{P}_k}(\mathbf{w}^{k}) \\
        & = \sum_{i=1}^{I}p^{(\mathcal{Q}_k)}(\pi=i)\mathbb{E}_{z\sim \pi_i} [\ell(\mathbf{w}^k, z)] - \sum_{i=1}^{I}p^{(\mathcal{P}_k)}(\pi=i)\mathbb{E}_{z\sim \pi_i}  [\ell(\mathbf{w}^k, z)] \\
        & \leq  \sum_{i=1}^{I} | p^{(\mathcal{P}_k)}(\pi=i) - p^{(\mathcal{Q}_k)}(\pi=i) |  \mathbb{E}_{z\sim \pi_i}  [\ell(\mathbf{w}^k, z)] \\
        & \leq G_k\cdot \text{dist}(\mathcal{P}_k, \mathcal{Q}_k).
    \end{split}
\end{equation*}

where $ \text{dist}(\mathcal{P}_k, \mathcal{Q}_k):=  \sum_{i=1}^{I} | p^{(\mathcal{P}_k)}(\pi=i) - p^{(\mathcal{Q}_k)}(\pi=i) | $ and $\mathbb{E}_{z\sim \pi_i}  [\ell(\mathbf{w}^k, z)] \leq G_k, \forall i \in I$ due to Assumption \ref{assumption:bounded_gradient}. Recall that the constant $G_k$ clarifies the bound of loss on the corresponding model $\mathbf{w}^k$.
For the second term, with probability at least $1-\delta$, we have $|\mathcal{R}_{\mathcal{P}_k}(\mathbf{w}^k) - \mathcal{R}_{P_k}(\mathbf{w}^k)| \leq \sqrt{\frac{\log(4/\delta)}{2|P_k|}}$.
For the third term, we define $\varpi_k := \mathcal{R}_{P_k}(\mathbf{w}^{k})  - \mathcal{R}_{\mathcal{Q}_k}^*(\mathbf{w}^{\mathcal{Q}_k}) $, which can be regarded as a constant. This is because $\mathcal{R}_{P_k}(\mathbf{w}^{k}) $ represents empirical risk and $\mathcal{R}_{\mathcal{Q}_k}^*(\mathbf{w^{\mathcal{Q}_k}})$ is the ideal minimal empirical risk of model $\mathbf{w}^{\mathcal{Q}_k}$ trained on sub-distribution $\mathcal{Q}_k$.

Therefore, with probability at least $1-\delta$, given model $\mathbf{w}^{k}$, 
\begin{align*}
    \mathcal{R}_{\mathcal{Q}_k}(\mathbf{w}^k) - \mathcal{R}_{\mathcal{Q}_k}^*(\mathbf{w}^{\mathcal{Q}_k}) \leq G_k \cdot \text{dist}(\mathcal{P}_k, \mathcal{Q}_k) + \sqrt{\frac{\log(4/\delta)}{2|P_k|}} + \varpi_k.
\end{align*}
\end{proof}

\section{More experimental results}
\label{appendix:experimental_settings}

\subsection{Datasets and parameter settings}
\label{appendix:dataset_setting}
We empirically evaluate FIS on the CelebA dataset, an image dataset commonly used in the fairness literature \citep{liu2015deep}. We also evaluate FIS on two tabular datasets: UCI Adult \citep{asuncion2007uci} and Compas dataset \citep{angwin2016machine}. 

\subsubsection{CelebA dataset}
\paragraph{Dataset details}
CelebA \citep{liu2015deep} is an image dataset with 202,599 celebrity face images annotated with 40 attributes, including gender, hair colour, age, smiling, etc. The sensitive attribute is gender: $S=\{$Men, Women$\}$. We select four binary classification targets, including smiling, attractive, young, and big nose. For example, the task is to predict whether a person in an image is young ($Y=1$) or non-young ($Y=0$), among other attribute predictions.

\paragraph{Hyper-parameter details}
In all our experiments using the CelebA dataset, we train a vision transformer with patch size $(8,8)$ using SGD optimizer and a batch size of 128. 
The epochs are split into two phases: warm-up epochs (5 epochs) and training epochs (10 epochs). The default label budget per round, which represents the number of solicited data samples, is set to 256. Additionally, the default values for learning rate, momentum, and weight decay are 0.01, 0.9, and 0.0005, respectively.
We initially allocate 2\% of the training set for training purposes and the remaining 98\% for sampling. Then, we randomly select 10\% of the test data for validation. For JTT, we explore 10\% data for training purposes with weights $\lambda = 20$ for retraining misclassified examples.

\subsubsection{UCI Adult dataset}
\paragraph{Dataset details.}

The Adult dataset \citep{asuncion2007uci} predicts whether an individual's annual income falls below or exceeds $50K$, denoted as $Y=0$ and $Y=1$, respectively. This prediction is based on a variety of continuous and categorical attributes, including education level, age, gender, occupation, etc. 
The default sensitive attribute in this dataset is gender $S=\{$Men, Women$\}$ \citep{zemel2013learning}. 
In particular, we also group this dataset using age $S=\{$Teenager, Non-teenager$\}$. To achieve a balanced age distribution in the dataset, individuals with an age of less than 30 are grouped as ``Teenagers".
The dataset contains a total of 45,000 instances.
The dataset exhibits an imbalance: there are twice as many men as women, and only $15\%$ of those with high incomes are women.

\paragraph{Hyper-parameter details.} In the experiments using the Adult dataset, we train a two-layer ReLU network with a hidden size of 64.
The epochs are split into two phases: warm-up epochs (100 epochs) and training epochs (60 epochs). The default label budget per round, which represents the number of solicited data samples, is set to 1024. Additionally, the default values for learning rate, momentum, and weight decay are 0.00001, 0.9, and 0.0005, respectively.
We resample the datasets to balance the class and group membership \citep{chawla2002smote}.  The dataset is randomly split into a train and a test set in a ratio of 80 to 20. Then, we randomly re-select 20\% of the training set for initial training and the remaining 80\% for sampling. Also, 20\% examples of the test set are selected to form a validation set. 
We utilize the whole model to compute the prediction influence and fairness for examples.
Then, we randomly select 10\% of the test data for validation. For JTT, we explore 30\% data for training purposes with weights $\lambda =20$ for retraining misclassified examples.

\subsubsection{Compas dataset}
\paragraph{Dataset details.} Compas dataset, also known as the Correctional Offender Management Profiling for Alternative Sanctions dataset, is a collection of data related to criminal defendants. It contains information on approximately 6,172 individuals who were assessed for risk of re-offending. The primary task associated with this dataset is predicting whether a defendant will re-offend ($Y=1$) or not ($Y=0$) within a certain time frame after their release. 
The sensitive attribute is often considered to be race, specifically whether the individual is classified as African American or not.

\paragraph{Hyper-parameter details.}
In the experiments using the Compas dataset, we train a multi-layer neural network with one hidden layer consisting of 64 neurons.
The epochs are split into two phases: warm-up epochs (20 epochs) and training epochs (50 epochs). The default label budget per round, which represents the number of solicited data samples, is set to 128. Furthermore, the default values for learning rate, momentum, and weight decay are 0.01, 0.9, and 0.0005, respectively.
We resample the datasets to balance the class and group membership \citep{chawla2002smote}. The dataset is initially split into training and test sets at an 80-20 ratio. Then, we further split 20\% of the training set for initial training, reserving the remaining 80\% for sampling. Additionally, 20\% of the test set is selected to create a validation set. We use the entire model to calculate prediction influence and evaluate fairness for the dataset examples.

\subsection{Full version of experimental results}
\label{appendix:full_results}

\begin{table*}[H]
    \centering
    \caption{ We report the  (\texttt{test\_accuracy}, \texttt{fairness\_violation}) for evaluating the performance on the \textbf{CelebA dataset} with two binary classification targets \texttt{Smiling} and \texttt{Attractive}. We select \texttt{gender} as the sensitive attribute. }
    \vspace{-2mm}
    \begin{minipage}{.9\linewidth}
        \centering
    \vspace{1mm}
    \resizebox{1\linewidth}{!}{
    \begin{tabular}{c ccc }
    \toprule
        \multirow{2}{*}{$\epsilon = 0.05$} &   \multicolumn{3}{c}{CelebA - Smiling} \\ 
        \cmidrule{2-4}~ & (Test\_acc$\uparrow$, ~~DP$\downarrow$) & (Test\_acc$\uparrow$, ~~EOp$\downarrow$) & (Test\_acc$\uparrow$, ~~EOd$\downarrow$)\\ 
        \midrule
Base(ERM) &  (0.789 ± 0.003, 0.127 ± 0.007)& (0.789 ± 0.003, 0.067 ± 0.003)& (0.789 ± 0.003, 0.094 ± 0.008)\\   
Random & (0.837 ± 0.017, 0.133 ± 0.001)& (0.840 ± 0.020, 0.059 ± 0.005)& (0.837 ± 0.022, 0.039 ± 0.008) \\   
BALD & (0.856 ± 0.024, 0.140 ± 0.012)& (0.854 ± 0.025, 0.062 ± 0.005)& (0.856 ± 0.024, 0.033 ± 0.005)\\    
ISAL & (0.867 ± 0.022, 0.138 ± 0.010)& (0.867 ± 0.022, 0.057 ± 0.007)& (0.867 ± 0.021, 0.031 ± 0.001)\\    
JTT-20& (0.698 ± 0.018, 0.077 ± 0.006)& (0.679 ± 0.013, 0.045 ± 0.002)& (0.689 ± 0.017, 0.079 ± 0.007)\\     
 \midrule
        FIS & (0.848 ± 0.025, 0.084 ± 0.055)& (0.845 ± 0.034, 0.032 ± 0.007)& (0.800 ± 0.098, 0.033 ± 0.010)\\ 
    \bottomrule
    \end{tabular}}
    \end{minipage}%
    
    \begin{minipage}{.9\linewidth}
        \centering
    \vspace{1mm}
    \resizebox{1\linewidth}{!}{
    \begin{tabular}{c ccc }
    \toprule
        \multirow{2}{*}{$\epsilon = 0.05$} &   \multicolumn{3}{c}{CelebA - Attractive} \\ 
        \cmidrule{2-4}~ & (Test\_acc$\uparrow$, ~~DP$\downarrow$) & (Test\_acc$\uparrow$, ~~EOp$\downarrow$) & (Test\_acc$\uparrow$, ~~EOd$\downarrow$) \\
        \midrule
Base(ERM)& (0.664 ± 0.005, 0.251 ± 0.022)& (0.658 ± 0.009, 0.179 ± 0.014)& (0.664 ± 0.005, 0.181 ± 0.017) \\   
Random & (0.693 ± 0.002, 0.372 ± 0.006)& (0.693 ± 0.012, 0.250 ± 0.011)& (0.693 ± 0.002, 0.252 ± 0.008)  \\   
BALD& (0.712 ± 0.015, 0.431 ± 0.020)& (0.712 ± 0.016, 0.297 ± 0.008)& (0.712 ± 0.016, 0.307 ± 0.007) \\    
ISAL & (0.713 ± 0.009, 0.410 ± 0.003)& (0.713 ± 0.009, 0.275 ± 0.011)& (0.712 ± 0.009, 0.296 ± 0.023)\\    
JTT-20& (0.621 ± 0.015, 0.094 ± 0.004)& (0.605 ± 0.013, 0.083 ± 0.011)& (0.617 ± 0.013, 0.086 ± 0.006)\\     
 \midrule
        FIS& (0.644 ± 0.025, 0.200 ± 0.060)& (0.674 ± 0.015, 0.112 ± 0.025)& (0.677 ± 0.010, 0.159 ± 0.008)\\ 
        \bottomrule
    \end{tabular}}
    \end{minipage}
\end{table*}

\begin{table*}[!ht]
    \centering
    \caption{ We report the  (\texttt{test\_accuracy}, \texttt{fairness\_violation}) for evaluating the performance on the \textbf{CelebA dataset} with two binary classification targets \texttt{Young} and \texttt{Big Nose}. We select \texttt{gender} as the sensitive attribute.  }
    \vspace{-2mm}
    \begin{minipage}{.9\linewidth}
        \centering
    \vspace{1mm}
    \resizebox{1\linewidth}{!}{
    \begin{tabular}{c ccc }
    \toprule
        \multirow{2}{*}{$\epsilon = 0.05$} &   \multicolumn{3}{c}{\textbf{CelebA - Young}} \\ 
        \cmidrule{2-4}~ & (Test\_acc$\uparrow$, ~~DP$\downarrow$) & (Test\_acc$\uparrow$, ~~EOp$\downarrow$) & (Test\_acc$\uparrow$, ~~EOd$\downarrow$) \\ 
        \midrule
Base(ERM) &(0.755 ± 0.002, 0.190 ± 0.017) & (0.759 ± 0.005, 0.102 ± 0.005) & (0.755 ± 0.002, 0.182 ± 0.018) \\			
Random & (0.763 ± 0.008, 0.158 ± 0.016)&  (0.698 ± 0.109, 0.075 ± 0.021) & (0.766 ± 0.011, 0.156 ± 0.017) \\			
BALD& (0.776 ± 0.021, 0.165 ± 0.019) & (0.775 ± 0.020, 0.076 ± 0.007)  &(0.779 ± 0.005, 0.162 ± 0.021)  \\				
ISAL& (0.781 ± 0.020, 0.180 ± 0.014) & (0.781 ± 0.020, 0.084 ± 0.006) & (0.780 ± 0.021, 0.173 ± 0.012)	\\				
JTT-20&(0.774 ± 0.026, 0.167 ± 0.016) & (0.774 ± 0.024, 0.083 ± 0.007) & (0.772 ± 0.023, 0.171 ± 0.025) \\					
 \midrule
        FIS& (0.763 ± 0.004, 0.104 ± 0.059) & (0.773 ± 0.003, 0.041 ± 0.015)  &(0.763 ± 0.005, 0.118 ± 0.074)  \\	
    \bottomrule
    \end{tabular}}

    \end{minipage}%
    
    \begin{minipage}{.9\linewidth}
        \centering
    \vspace{1mm}
    \resizebox{1\linewidth}{!}{
    \begin{tabular}{c ccc }
    \toprule
        \multirow{2}{*}{$\epsilon = 0.05$} &   \multicolumn{3}{c}{\textbf{CelebA - Big Nose}} \\ 
        \cmidrule{2-4}~ & (Test\_acc$\uparrow$, ~~DP$\downarrow$) & (Test\_acc$\uparrow$, ~~EOp$\downarrow$) & (Test\_acc$\uparrow$, ~~EOd$\downarrow$) \\
        \midrule
Base(ERM) & (0.752 ± 0.024, 0.198 ± 0.034) & (0.755 ± 0.022, 0.206 ± 0.018) & (0.755 ± 0.022, 0.183 ± 0.029)\\			
Random & (0.760 ± 0.009, 0.177 ± 0.014) & (0.757 ± 0.004, 0.190 ± 0.029) & (0.759 ± 0.006, 0.167 ± 0.029) \\			
BALD &(0.777 ± 0.004, 0.184 ± 0.016)  &(0.765 ± 0.003, 0.209 ± 0.014) & (0.770 ± 0.004, 0.170 ± 0.015)\\				
ISAL	& (0.782 ± 0.001, 0.148 ± 0.059) & (0.782 ± 0.001, 0.154 ± 0.080) & (0.779 ± 0.006, 0.145 ± 0.065)\\				
JTT-20& (0.771 ± 0.014, 0.191 ± 0.036) & (0.758 ± 0.026, 0.223 ± 0.018) & (0.764 ± 0.019, 0.193 ± 0.016)\\					
 \midrule
        FIS& (0.779 ± 0.009, 0.089 ± 0.076) & (0.780 ± 0.013, 0.046 ± 0.072) & (0.772 ± 0.015, 0.062± 0.081)\\	
        \bottomrule
    \end{tabular}}
    \end{minipage}
\end{table*}

\begin{table*}[!ht]
    \centering
    \caption{The performance results of (\texttt{test\_accuracy}, \texttt{fairness\_violation}) on the \textbf{Adult dataset}. The sensitive attribute is \texttt{age}.  }
    \vspace{-2mm}
    
    \begin{minipage}{1\linewidth}
        \centering
    \vspace{1mm}
    \resizebox{.9\linewidth}{!}{
    \begin{tabular}{c ccc }
    \toprule
        \multirow{2}{*}{$\epsilon = 0.05$} &   \multicolumn{3}{c}{\textbf{Income (age)}} \\ 
        \cmidrule{2-4}~ & (Test\_acc$\uparrow$, ~~DP$\downarrow$) & (Test\_acc$\uparrow$, ~~EOp$\downarrow$) & (Test\_acc$\uparrow$, ~~EOd$\downarrow$) \\
        \midrule
Base(ERM) &  (0.665 ± 0.045, 0.255 ± 0.041) &(0.665 ± 0.045, 0.115 ± 0.036)& (0.665 ± 0.045, 0.158 ± 0.030)\\			
Random  &	(0.765 ± 0.021, 0.209 ± 0.042) &(0.758 ± 0.027, 0.127 ± 0.013) &(0.764 ± 0.018, 0.133 ± 0.027)\\			
BALD&	(0.767 ± 0.019, 0.203 ± 0.017)& (0.703 ± 0.111, 0.117 ± 0.013) &(0.763 ± 0.022, 0.128 ± 0.014)\\				
ISAL	 &	(0.765 ± 0.020, 0.215 ± 0.011)& (0.755 ± 0.028, 0.128 ± 0.013) &(0.761 ± 0.024, 0.138 ± 0.009)\\				
JTT-20 &(0.751 ± 0.013, 0.262 ± 0.020)& (0.742 ± 0.018, 0.149 ± 0.021) &(0.745 ± 0.014, 0.171 ± 0.012)\\					
 \midrule
        FIS& (0.766 ± 0.013, 0.214 ± 0.009)& (0.757 ± 0.034, 0.113 ± 0.017)& (0.763 ± 0.011, 0.143 ± 0.023)\\	
        \bottomrule
    \end{tabular}}
    \end{minipage}
\end{table*}

\begin{table*}[!ht]
    \centering
    \caption{The performance results of (\texttt{test\_accuracy}, \texttt{fairness\_violation}) on the \textbf{Compas dataset}. The selected sensitive attribute is \texttt{race}.}
    \vspace{-2mm}
    \begin{minipage}{.9\linewidth}
        \centering
    \vspace{1mm}
    \resizebox{1\linewidth}{!}{
    \begin{tabular}{c ccc }
    \toprule
        \multirow{2}{*}{$\epsilon = 0.05$} &   \multicolumn{3}{c}{\textbf{Recidivism}} \\ 
        \cmidrule{2-4}~ & (Test\_acc$\uparrow$, ~~DP$\downarrow$) & (Test\_acc$\uparrow$, ~~EOp$\downarrow$) & (Test\_acc$\uparrow$, ~~EOd$\downarrow$) \\ 
    \midrule
   Base(ERM)       & (0.675 ± 0.005, 0.333 ± 0.008) &   (0.675 ± 0.005, 0.267 ± 0.010)  &  (0.675 ± 0.005, 0.284 ± 0.010)\\
    Random    & (0.689 ± 0.007, 0.305 ± 0.023)  &  (0.686 ± 0.016, 0.253 ± 0.035)  &  (0.688 ± 0.006, 0.256 ± 0.023)\\
    BALD      & (0.688 ± 0.011, 0.313 ± 0.012)  &  (0.686 ± 0.015, 0.256 ± 0.031)   & (0.688 ± 0.011, 0.263 ± 0.011)\\
    ISAL     &  (0.697 ± 0.002, 0.308 ± 0.025)  &  (0.698 ± 0.004, 0.274 ± 0.022)   & (0.697 ± 0.001, 0.260 ± 0.026)\\
    JTT-20   & (0.646 ± 0.009, 0.240 ± 0.016)    &(0.630 ± 0.024, 0.141 ± 0.028)   & (0.646 ± 0.009, 0.200 ± 0.007)\\
    \midrule
    FIS     &   (0.690 ± 0.002, 0.299 ± 0.029)  &  (0.694 ± 0.002, 0.241 ± 0.035)  &  (0.698 ± 0.005, 0.252 ± 0.030) \\
    \bottomrule
    \end{tabular}}
    \end{minipage}%
    
\end{table*}

\subsection{Exploring the impact of label budgets}
\label{appendix:impact_of_label_budget}

 {In our study, we examine how varying label budgets $r$ influence the balance between accuracy and fairness. We present the results of test accuracy and fairness disparity across different label budgets on the CelebA, Compas, and Adult datasets. In these experiments, we use the demographics parity (DP) as our fairness metric. For convenience, we maintain a fixed label budget per round, using rounds of label budget allocation to demonstrate its impact. The designated label budgets per round for the CelebA, Compas, and Adult are 256, 128, and 512, respectively.
In the following figures, the $x$-axis is both the number of label budget rounds. The $y$-axis for the left and right sub-figures are test accuracy and DP gap, respectively.
As observed in Figures \ref{fig:celeba_impact_label_budget}-\ref{fig:compas_impact_label_budget}, compared to the three baselines (BALD, JTT-20, and ISAL), our approach substantially reduces the DP gap without sacrificing test accuracy.}

Specifically, on the Adult dataset, both accuracy and fairness violation converge to similar numerical values when the budget is lower than 20, suggesting a potential overfitting of the model to insufficient training examples. With a larger budget, our algorithms outperform other baseline methods, achieving higher accuracy and lower demographic disparity.

\begin{figure}[h]
    \centering
    \includegraphics[width=0.9\linewidth]{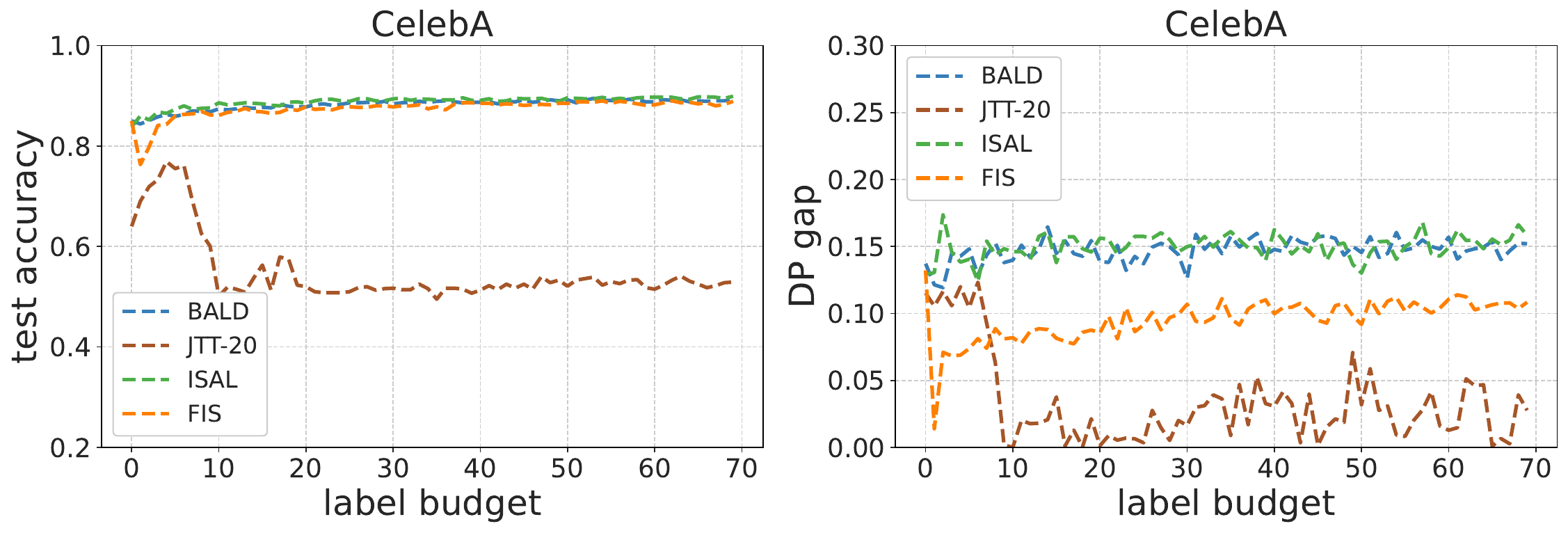}
    \caption{ The impact of label budgets on the test accuracy \& DP gap in the CelebA dataset. The binary classification targets is \texttt{Smiling}.}
    \label{fig:celeba_impact_label_budget}
\end{figure}

\begin{figure}[h]
    \centering
    \includegraphics[width=0.9\linewidth]{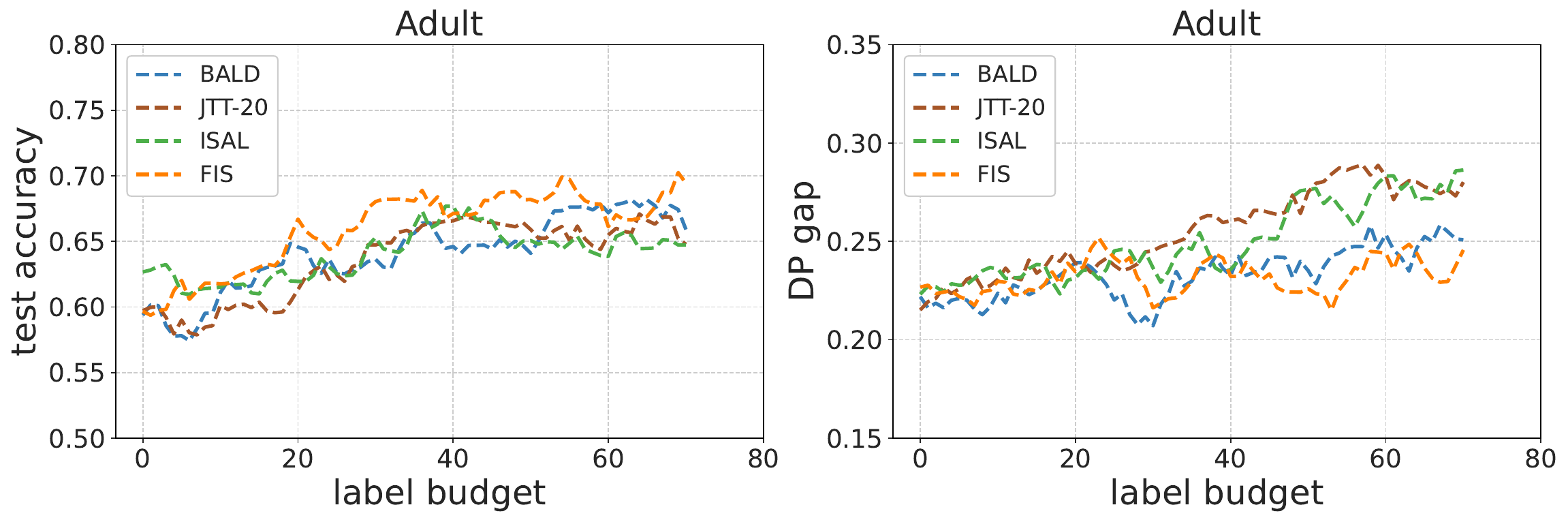}
    \caption{ The impact of label budgets on the test accuracy \& DP gap in the Adult dataset. The sensitive attribute is \texttt{sex}.}
    \label{fig:adult_impact_label_budget}
\end{figure}

\begin{figure}[h]
    \centering
    \includegraphics[width=0.9\linewidth]{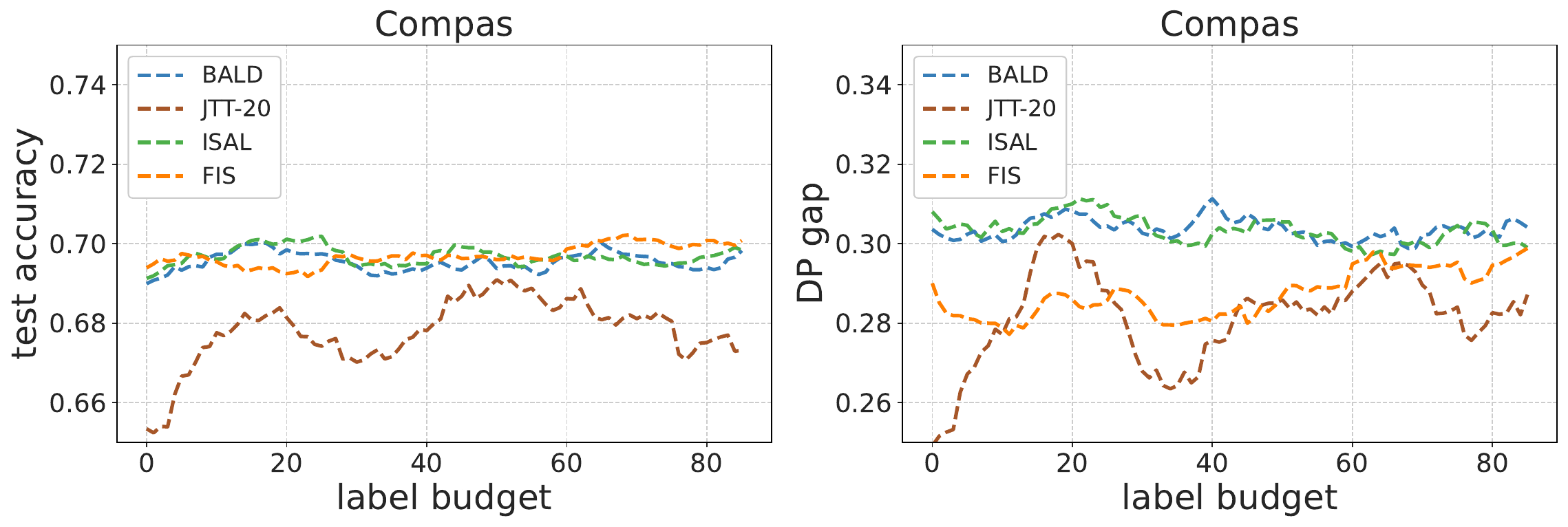}
    \caption{  The impact of label budgets on the test accuracy \& DP gap in the Compas dataset. The sensitive attribute is \texttt{race}.}
    \label{fig:compas_impact_label_budget}
\end{figure}

\subsection{The role of validation dataset size} \label{appendix:impact_of_validaion_size}

In this subsection, we explore the impact of adjusting the validation set size on our algorithm's performance. We present the test accuracy and fairness disparity across different validation set sizes on the CelebA, Compas, and Adult datasets. 
Note that the default validation set size for image and tabular datasets is set to 1\% and 4\% of the whole dataset size, respectively. That is, the default validation set sizes are 1996 (CelebA), 1800 (Adult), and 247 (Compas) instances, respectively.
In particular, given the smaller size of the default validation set, the minimum scale of the validation set size is set to $\sfrac{1}{5}\times$ (nearly 400 CelebA images). Tables~\ref{table:celeba_validation_set_size} and \ref{table:tabular_validate_set_size} present the performance results on the CelebA, UCI Adult, and Compas datasets, respectively.

\begin{table*}[!htbp]
    \centering
        \caption{The performance results of (\texttt{test\_accuracy}, \texttt{fairness\_violation}) on the \textbf{CelebA dataset} when the validation set size is reduced to $\sfrac{1}{2}\times$ and $\sfrac{1}{5}\times$. Our algorithm retains the test accuracy and fairness violation when we vary the validation set size.}
    \resizebox{\linewidth}{!}{
    \begin{tabular}{c   ccc ccc}
    \toprule
        \multirow{2}{*}{$\epsilon = 0.05$} &  \multicolumn{3}{c}{\textbf{CelebA - Smiling}}  &  \multicolumn{3}{c}{\textbf{CelebA - Attractive}} \\
        \cmidrule{2-4} \cmidrule(lr){5-7}
        ~ & (Test\_acc$\uparrow$, DP$\downarrow$) & (Test\_acc$\uparrow$, EOp$\downarrow$) & (Test\_acc$\uparrow$, EOd$\downarrow$)  & (Test\_acc$\uparrow$, DP$\downarrow$) & (Test\_acc$\uparrow$, EOp$\downarrow$) & (Test\_acc$\uparrow$, EOd$\downarrow$)\\ 
    \midrule
        $1 \times$ &  (0.848, 0.084) & (0.876, 0.031) & (0.864, 0.030) &  (0.680, {0.285}) & 	(0.695, 0.148)  & (0.692, 0.148)  \\
        $\sfrac{1}{2}\times$ & (0.872, 0.105) & (0.891, 0.042) & (0.880, 0.028) & (0.648, 0.249) & (0.688, 0.188)  & (0.678, 0.147)\\
        $\sfrac{1}{5}\times$ & (0.872, 0.117) & (0.863, 0.057) & (0.886, 0.028)& (0.604, 0.171) & (0.707, 0.209) & (0.645, 0.145)\\
    \bottomrule
    \toprule
        \multirow{2}{*}{$\epsilon = 0.05$} &  \multicolumn{3}{c}{\textbf{CelebA - Young}}  &  \multicolumn{3}{c}{\textbf{CelebA - Big\_Nose}} \\
        \cmidrule{2-4} \cmidrule(lr){5-7}
        ~ & (Test\_acc$\uparrow$, DP$\downarrow$) & (Test\_acc$\uparrow$, EOp$\downarrow$) & (Test\_acc$\uparrow$, EOd$\downarrow$)  & (Test\_acc$\uparrow$, DP$\downarrow$) & (Test\_acc$\uparrow$, EOp$\downarrow$) & (Test\_acc$\uparrow$, EOd$\downarrow$)\\
    \midrule
    $1\times$ &  (0.766, {0.139})	&	(0.775, {0.043}) 	&(0.769, {0.168}) 	&(0.771, 0.156) &(0.765, {0.129})	& (0.758, {0.155})\\
    $\sfrac{1}{2}\times$ & (0.735, 0.093) & (0.762, 0.067) & (0.769, 0.055)& (0.771, 0.054)  &(0.761, 0.162) & (0.748, 0.096)\\
    $\sfrac{1}{5} \times$ & (0.743, 0.107)  &(0.780, 0.097) & (0.757, 0.166) & (0.772, 0.095)  &(0.750, 0.300)  &(0.760, 0.156)\\
    \bottomrule
    \end{tabular}}
    \label{table:celeba_validation_set_size}
\end{table*}

\begin{table*}[!htbp]
    \centering
        \caption{We examine the performance results of (\texttt{test\_accuracy}, \texttt{fairness\_violation}) on the tabular datasets  (\textbf{Left}: Adult; \textbf{Right}: Compas) when the validation set size is reduced to $\sfrac{1}{2}\times$, $\sfrac{1}{4}\times$, and $\sfrac{1}{20}\times$. We observe that our algorithm still retains the test accuracy and fairness violation when we vary the validation set size. }
    \resizebox{\linewidth}{!}{
    \begin{tabular}{c   ccc ccc}
    \toprule
        \multirow{2}{*}{$\epsilon = 0.05$} &  \multicolumn{3}{c}{\textbf{Adult - Income(Age)}}  &  \multicolumn{3}{c}{\textbf{Compas - Recidivism}} \\
        \cmidrule{2-4} \cmidrule(lr){5-7}
        ~& (Test\_acc$\uparrow$, DP$\downarrow$) & (Test\_acc$\uparrow$, EOp$\downarrow$) & (Test\_acc$\uparrow$, EOd$\downarrow$)  & (Test\_acc$\uparrow$, DP$\downarrow$) & (Test\_acc$\uparrow$, EOp$\downarrow$) & (Test\_acc$\uparrow$, EOd$\downarrow$) \\ 
        \midrule
        $1 \times$ & (0.757, 0.198) &  (0.718, 0.124) &  (0.750, 0.125) &  (0.690, 0.313)  &  (0.696, 0.249)  &  (0.702, 0.257) \\
        $\sfrac{1}{2}\times$ & (0.717, 0.259) &  (0.634, 0.123) &  (0.736, 0.143) & (0.683, 0.270)  &  (0.680, 0.247)  &  (0.693, 0.244)\\
        $\sfrac{1}{4}\times$ & (0.749, 0.196) &  (0.750, 0.121) &  (0.747, 0.137)  & (0.677, 0.283)  &  (0.682, 0.276)   & (0.680, 0.244)\\
        $\sfrac{1}{20}\times$ & (0.721, 0.205) &  (0.706, 0.148) &  (0.706, 0.179) & (0.689, 0.289)  &  (0.668, 0.236)  &  (0.683, 0.252)\\	
    \bottomrule
    \end{tabular}}
    \label{table:tabular_validate_set_size}
\end{table*}

\subsection{Benchmarking model performance with validation set enhancements}\label{apx:validation_set_enhancements}

Note that we resort to an additional validation set for developing FIS. 
 To demonstrate FIS's advantage at the same levels of information, we introduce a new baseline called \texttt{Random+Val}.  This method involves continuing to train the model with a randomly sampled validation set. Specifically, we start with the \texttt{Random}'s last saved checkpoint and train it further using the validation set. In particular, we would incorporate a fairness regularizer with dynamic weight to train the model using validation data, considering its sensitive attributes to reduce fairness disparity.
 Due to its small size, we limit training to 10 epochs to avoid overfitting. The performance results of \texttt{Random}, \texttt{Random+Val}, and FIS are given in Table \ref{table:validate_size_to_train}.

\begin{table*}[!htbp]
    \centering
        \caption{Comparative analysis of (\texttt{test\_accuracy}, \texttt{fairness\_violation}) in the CelebA, Adult and Compas datasets. The table illustrates that even at the same information level (using the validation set to train), FIS can obtain better performances. Similarly, we highlight all the fairer but without sacrificing accuracy results in boldface compared to \texttt{Random}.}
    \resizebox{\linewidth}{!}{
    \begin{tabular}{c   ccc ccc}
    \toprule
        \multirow{2}{*}{$\epsilon = 0.05$} &  \multicolumn{3}{c}{\textbf{CelebA - Smiling}}  &  \multicolumn{3}{c}{\textbf{CelebA - Attractive}} \\
        \cmidrule{2-4} \cmidrule(lr){5-7}
        ~ & (Test\_acc$\uparrow$, DP$\downarrow$) & (Test\_acc$\uparrow$, EOp$\downarrow$) & (Test\_acc$\uparrow$, EOd$\downarrow$)  & (Test\_acc$\uparrow$, DP$\downarrow$) & (Test\_acc$\uparrow$, EOp$\downarrow$) & (Test\_acc$\uparrow$, EOd$\downarrow$)\\ 
    \midrule
        Random  & (0.853, 0.132)	&	(0.863, 0.053)	& (0.861, 0.031) & (0.696, 0.367)&	(0.708, 0.253)	&(0.696, 0.243)\\
        Random + Val & (0.801, 0,115) & (0.872, 0.139) & (0.879, 0.153) & (0.638, 0.199) & (0.699, 0.366) & (0.699, 0.355)\\
        \midrule
        FIS &  \textbf{(0.877, 0.122)}	&	\textbf{(0.886, {0.040}) }& \textbf{(0.882, {0.023})}	& \textbf{(0.680, {0.285})}	& \textbf{(0.695, 0.148)} & \textbf{(0.692, 0.148)}  \\
    \bottomrule
    \toprule
        \multirow{2}{*}{$\epsilon = 0.05$} &  \multicolumn{3}{c}{\textbf{Adult - Income(Age)}}  &  \multicolumn{3}{c}{\textbf{Compas - Recidivism}} \\
        \cmidrule{2-4} \cmidrule(lr){5-7}
        ~& (Test\_acc$\uparrow$, DP$\downarrow$) & (Test\_acc$\uparrow$, EOp$\downarrow$) & (Test\_acc$\uparrow$, EOd$\downarrow$)  & (Test\_acc$\uparrow$, DP$\downarrow$) & (Test\_acc$\uparrow$, EOp$\downarrow$) & (Test\_acc$\uparrow$, EOd$\downarrow$) \\ 
        \midrule
        Random &    (0.745, 0.236)  &(0.729, 0.136) & (0.748, 0.151)	 &  (0.696, 0.316) & (0.698, 0.258) & (0.694, 0.269)		 \\
        Random + Val & \textbf{(0.762, 0.161)} & (0.743, 0.266) & \textbf{(0.788, 0.131) } & (0.604, 0.136) & (0.623, 0.147) & (0.628, 0.153)\\
        \midrule
        FIS &   \textbf{(0.751, 0.205)}  &\textbf{(0.725, 0.130)}  & \textbf{(0.750, 0.125)}	&  \textbf{(0.688, 0.316)}   &\textbf{(0.695, 0.250)}  & \textbf{(0.693, 0.254)}		\\
    \bottomrule
    \end{tabular}}
    \label{table:validate_size_to_train}
\end{table*}

\section{Potential privacy leakage from non-demographic and demographic feature correlations}
\label{appendix:correlation_non-demongraphic_and_demographic}

Note that some non-demographic features may correlate with sensitive attributes (demographic information), which could lead to potential privacy leakage issues \citep{zhao2022towards}. However, it is crucial to clarify that our method specifically limits its query to the data. We only query the true labels for a selected subset of unlabeled examples. While correlations between non-demographic variables and sensitive attributes may exist in the underlying data, our method itself does not introduce any additional privacy leakage beyond what is inherent in the original dataset. On the other hand, our work's primary focus is not on addressing privacy concerns related to demographic information. Rather, our main objective is to reduce fairness disparities while maintaining a favorable utility trade-off. We achieve this without explicitly incorporating any additional sensitive information into the process and without directly engaging with the complex privacy implications of demographic data usage.

To address this potential privacy concern, we can analyze it via differential privacy. Consider a function $\phi(\cdot)$ that maps non-demographic information (feature $\boldsymbol{x}$ and label $y$) to a sensitive attribute $s$. Due to insufficient information, $\phi(\cdot)$ is not deterministic, making it unable to precisely estimate $s$. If a deterministic mapping function existed, it would unavoidably lead to privacy leakage. Therefore, we assume that given the same features and label, the function $\phi(\cdot)$ output might vary.
For example, in the Adult dataset, $\boldsymbol{x}$ includes age and credit history, $y$ is the income, and $s$ is the gender.  In practice, both men and women have a probability of earning more than 50K (i.e., $y=1$) given the same age, credit history, etc (same feature $\boldsymbol{x}$).
Suppose $P(\phi(\boldsymbol{x}, y)=s \mid S =s, \boldsymbol{x}, y) \leq 1- \epsilon_0$ and $P(\phi(\boldsymbol{x}, y)=s \mid S =s', \boldsymbol{x}, y) \geq \epsilon_1$ for all $\boldsymbol{x}, y, s, s'$ (where $s \neq s'$). Here, $S$ represents the sensitive attribute variable, which is unknown to $\phi(\cdot)$.
Then, we have
$$
\frac{\mathbb{P}(\phi(\boldsymbol{x}, y)=\tilde{s} | S = s, \boldsymbol{x}, y)}{\mathbb{P}(\phi(\boldsymbol{x}, y)=\tilde{s} | S = s', \boldsymbol{x}, y)}  \leq \frac{\max \mathbb{P}(\phi(\boldsymbol{x}, y)=\tilde{s} \mid S=s, \boldsymbol{x}, y)}{\min \mathbb{P}(\phi(\boldsymbol{x}, y)=\tilde{s} \mid S=s^{\prime}, \boldsymbol{x},y)} \leq \frac{1-\epsilon_0}{\epsilon_1}=e^{\varepsilon} .
$$
where $\varepsilon=\ln \left(\frac{1-\epsilon_0}{\epsilon_1}\right)$. In practice, if the mapping function is too strong, i.e. $\ln \left(\frac{1-\epsilon_0}{\epsilon_1}\right)$ is too large, we can add additional noise to reduce their informativeness and therefore better protect privacy.

\end{document}